\documentclass[lettersize,journal]{IEEEtran}
\usepackage{amsmath,amsfonts}
\usepackage{algorithmic}
\usepackage{algorithm}
\usepackage{array}
\usepackage{amsmath,amssymb,amsfonts}
\usepackage[caption=false,font=normalsize,labelfont=sf,textfont=sf]{subfig}
\usepackage{textcomp}
\usepackage{stfloats}
\usepackage{url}
\usepackage{verbatim}
\usepackage{graphicx}
\usepackage{cite}
\usepackage{bm}
\usepackage{hyperref}
\hypersetup{hidelinks,
	colorlinks=true,
	allcolors=black,
	pdfstartview=Fit,
	breaklinks=true}

\newtheorem{proof}{Proof}
\newtheorem{ass}{\bf Assumption}
\newtheorem{remark}{\bf Remark}
\newtheorem{thm}{\bf Theorem}

\hyphenation{op-tical net-works semi-conduc-tor IEEE-Xplore}

\begin{document}

\title{Policy Learning for Nonlinear Model Predictive Control with Application to USVs}

\author{Rizhong~Wang, Huiping~Li,~\IEEEmembership{Senior Member,~IEEE,} Bin Liang,~\IEEEmembership{Senior Member,~IEEE}, Yang Shi,~\IEEEmembership{Fellow,~IEEE,}
	and~Demin Xu
	\thanks{R. Wang, H. Li and D. Xu are with School of Marine Science and Technology, Northwestern Polytechnical University, Xi'an 710072, China (e-mail: rizhongwang@mail.nwpu.edu.cn; lihuiping@nwpu.edu.cn; xudm@nwpu.edu.cn); B. Liang is with Tsinghua University (bliang@tsinghua.edu.cn);
	Y. Shi is with Department of Mechanical Engineering, University of Victoria (email:yshi@uvic.ca).}
	\thanks{National Natural Science Foundation of China (NSFC) under Grant 61922068, 61733014; Shaanxi Provincial Funds for Distinguished Young Scientists under Grant 2019JC-14; Aoxiang Youth Scholar Program under Grant.}}



\maketitle

\begin{abstract}
The unaffordable computation load of nonlinear model predictive control (NMPC) has prevented it for being used in robots with high sampling rates for decades. This paper is concerned with the policy learning problem for nonlinear MPC with system constraints, and its applications to unmanned surface vehicles (USVs), where the nonlinear MPC policy is learned offline and deployed online to resolve the computational complexity issue. A deep neural networks (DNN) based policy learning MPC (PL-MPC) method is proposed to avoid solving nonlinear optimal control problems online. The detailed policy learning method is developed and the PL-MPC algorithm is designed. The strategy to ensure the practical feasibility of policy implementation is proposed, and it is theoretically proved that the closed-loop system under the proposed method is asymptotically stable in probability. In addition, we apply the PL-MPC algorithm successfully to the motion control of USVs. It is shown that the proposed algorithm can be implemented at a sampling rate up to $5 Hz$ with high-precision motion control. The experiment video is available via:\url{https://v.youku.com/v_show/id_XNTkwMTM0NzM5Ng==.html}.
\end{abstract}

\begin{IEEEkeywords}
policy learning, model predictive control, deep neural networks, constraints, nonlinear systems, unmanned surface vessels (USVs).
\end{IEEEkeywords}

\section{Introduction}\label{section:introduction}

\IEEEPARstart{A}{s} one of the most advanced control technologies, model predictive control (MPC) has been widely used in process control\cite{mayne2000Industrial,joe2003Industrial}. Recently, MPC strategies have become popular to solve robot control problems because of its optimal control performance and the ability to handle complex system constraints \cite{Bemporad2018engines,AUV,UAV}. However, MPC requires solving an optimal control problem (OCP) periodically online, which is time consuming and cannot guarantee the real-time implementation in robot control applications. This brings great challenges to the application of MPC in the field of robotics\cite{Cannon2000,shi2017systems}.

In order to improve the online computational efficiency of OCP, several improved MPC algorithms have been proposed. In \cite{mark2004sqp}, the sequential quadratic programming (SQP) optimization strategy is adopted to decrease the computational time of OCP in nonlinear MPC (NMPC). The works in \cite{Carlo2019MIT} and \cite{Katz2019MIT} simplify the nonlinear quadruped robot model to a linear model. In that framework, the original NMPC optimization problem was replaced by a quadratic program (QP) problem, which can greatly reduce computational burden. In \cite{UAV}, a penalty term is added to the cost function to relax the inequality constraints of OCP, which improves the OCP efficiency.

Another approach to accelerate solution efficiency of OCP is to perform the online MPC optimization process offline. Explicit MPC is a typical method of is this regard \cite{Bemporad2009sempc,Ferreau2008sempc}. The works in \cite{Tondel2001Mempc} and \cite{Wen2009Aempc} obtain the explicit solution of the linear MPC optimal problem by solving the multi-parameter QP problem or using the piecewise affine function, respectively. In \cite{Kerrigan2018Aempc}, an approximate explicit MPC control policy is constructed using the barycentric interpolation method, and the system stability and feasibility of the algorithm is theoretically guaranteed. Note that all the above algorithms are only suitable for linear systems. For the non-convex OCP corresponding to the nonlinear system, the explicit MPC is no longer available because of the high computational complexity.

Recently, the learning MPC has become a promising approach to handle OCP based on machine learning \cite{Hewing2020Lmpc}. Since the control accuracy of MPC heavily relies on the accuracy of system model, the works in \cite{Aswani2013Lmpc,Limon2017Lmpc,Bouffard2012Lmpc} focused on learning the system model for MPC. Besides, machine learning methods can also assist the design of MPC controller. For example, the works in \cite{Bansal2017Lmpc,Piga2019Lmpc,Gros2020Lmpc} used a machine learning method to design the cost function. Alternatively, the work in \cite{Rosolia2018Lmpc} used the machine learning method to obtain a terminal region of MPC. Besides, machine learning algorithms can also learn the best system constraints of MPC\cite{Menner2021Lmpc,Chou2018Lmpc}. On the other hand, the MPC can provide expert data for machine learning algorithms. The work in \cite{Zhang2016Lmpc} used MPC to guide reinforcement learning (RL) training to improve the training efficiency of RL and guarantee the safety of the RL results.


Different from the traditional machine learning methods, deep neural networks (DNN) have strong capability to learn complex functions and policies. It can be implemented with faster computational speed of forward propagation after appropriate training, without solving complex optimization problems online. Therefore, a promising method to solve the real-time implementation issue of MPC is to use DNNs to learn the control policy offline and then deploy it online. In \cite{Zhang2021Lmpc}, the primal-dual DNN is used to learn the approximate policy of MPC. To improve the speed of solving large-scale linear MPC, the work in \cite{Chen2019Lmpc} used DNNs to learn the optimal solution, and the system stability and feasibility of optimization problem was guaranteed by primal active sets. The work in \cite{MichaelLearning} utilized the DNN to approximate control policy of robust MPC, and the feasibility and stability of the closed-loop system is also analyzed. In \cite{2108Chen}, constraint items are added to the traditional DNN to ensure that the approximate policy satisfies the constraints. However, most of the above works are developed for linear systems, which are not suitable for nonlinear systems such as robotic and vehicle systems. In addition, seldom have been implemented and verified via hardware systems. 

In this paper, we propose a new policy learning MPC (PL-MPC) scheme for constrained nonlinear systems using DNN and implement it to the motion control of USVs. The main contributions of the paper include:
\begin{itemize}
\item {We propose a deep supervised learning method to learn the policy of the constrained nonlinear MPC offline to greatly reduce the computational load. The detailed PL-MPC algorithm, the design of the DNN model and its training methods are developed, which provides a feasible tool to deploy nonlinear MPC for robotic control and planning in real-time application.}
\item {We conduct rigorous analysis of the proposed PL-MPC algorithm to make the proposed method theoretically valid. The proximal operator-based optimization and the quadratic penalty method are developed to ensure the practical feasibility of the proposed algorithm. With the PL-MPC algorithm, the closed-loop system is proved to be asymptotically stable in probability under mild conditions.}
\item {We implement the developed PL-MPC algorithm successfully to the motion control of an underactuated USV via lake experiments. The experimental results show that the PL-MPC algorithm can run up to the sampling rate of $5 Hz$, and the control performance is almost the same as the ideal NMPC in simulation. This verifies the effectiveness and advantage of the proposed policy learning MPC method for robotic.}
\end{itemize}

The rest of this paper is organized as follows: Section \ref{section:problem formulation} presents the preliminaries and Section \ref{section:LMPC policy} discusses the detailed PL-MPC algorithm. In Section \ref{section:feasibility and stability analysis}, the feasibility of the algorithm and the stability of the closed-loop system are presented. Then real-time experiments on USV motion control with the proposed method are conducted in Section \ref{section:experiments and comparisons}. Finally, Section \ref{section:conclusion} draws the conclusions.

Notations: In this paper, the following notations are used. The quadratic norm $\bm x^T \bm Q \bm x$ is denoted by $\left\| {{{\bm {x}}}} \right\|_{{Q}}^2$. $[\cdot]$ represents the integer operation, for example, $[x]$ denotes the integer part of $x$. $[x]^+$ denotes the function $\max(x,0)$, that is, if $x<0$, it has $[x]^+=0$, otherwise, $[x]^+=x$. If $X$ is a random variable, $P\{X>x\}$ denotes the probability of $X>x$. $E[X]$ and $V[X]$ represent the expectation and variance of the random variable $X$, respectively.

\section{Preliminaries}\label{section:problem formulation}

\subsection{System Model}\label{subsection:System Dynamics}

Consider a general discrete-time nonlinear dynamic system:

\begin{equation}\label{system dynamic}
\bm{x}(k+1) = \bm{f} (\bm{x}(k),\bm{u}(k)).
\end{equation}
Here, $\bm{x}(k)\in  \mathbb{R}^n$  and $\bm{u}(k) \in \mathbb{R}^m$ are the system state and the control input, respectively. The system state and control input are constrained by the compact sets $\mathcal{X}$ and $\mathcal{U}$ as follows:

\begin{equation}\label{constrain}
\begin{aligned}
&\mathcal{U}=\left\{\bm{u}(k)\in  \mathbb{R}^m|\bm{u}_{\min}\leq\bm{u}(k)\leq\bm{u}_{\max}\right\},\\
&\mathcal{X}=\left\{\bm{x}(k)\in  \mathbb{R}^n|\bm g(\bm{x}(k))\leq \bm{0}\right\}.
\end{aligned}
\end{equation}

We make the following assumptions about sets $\mathcal{U}$ and $\mathcal{X}$.

\begin{ass}\label{assumption:1}
$\mathcal{X}$ is a connected set, and $(\bm{0},\bm{0})$ is included within $\mathcal{U} \times \mathcal{X}$.
\end{ass}

To facilitate the controller design, the following assumption is made for the system in (\ref{system dynamic}).

\begin{ass}\label{assumption:2}
The function $\bm{f}(\cdot,\cdot):\mathbb{R}^n \times \mathbb{R}^m\to \mathbb{R}^n $ is second-order continuous differentiable, and $\bm{0} \in \mathbb{R}^n$ is an equilibrium point of the system, i.e., $\bm{f} (\bm{0},\bm{0}) = \bm{0}$.
\end{ass}

\subsection{Optimal Control Problem}\label{subsection:optimal control problem}

For MPC algorithm, the optimization object is a quadratic penalty on the error between predicted system state and control input with the target system state and control input. Without loss of generality, we choose the equilibrium point as the target system state and control input. The cost function is designed as 
\begin{equation}\label{cost function}
\begin{aligned}
&J(\bm{x}(k), \bar{\bm {u}}(\cdot;k))\\
=&\sum_{i=0}^{N-1} {F(\bar{\bm {x}}(k+i;k),\bar{\bm {u}}(k+i;k))}+V_f(\bar{\bm {x}}(k+N;k))\\
=&\sum_{i=0}^{N-1}({\left\| {{\bar{{\bm {x}}}}(k+i;k)} \right\|_{{Q}}^2 + \left\| {{{\bar{\bm {u}}}}(k+i;k))} \right\|_{{R}}^2})\\
&+\left\| {{{\bar{\bm {x}}}}(k+N;k)} \right\|_{{P}}^2,
\end{aligned}
\end{equation}
where $\bar{\bm {u}}(k+i;k)$ and $\bar{\bm {x}}(k+i;k)$ are control input sequence and system state sequence under cost function emanating from $\bm {x}(k)$. $N$ represents the step of the MPC algorithm in predicting the future state, which is called prediction horizon. $Q\geq0$, $R\geq0$ and $P\geq0$ represent the adjustable weighting matrices of MPC.

Then, OCP is formulated as follows:

{\bf Problem} $\mathcal{P}1$ :
\begin{equation*}\label{OCP1}
\begin{aligned}
	{{\bm {u}}^*}(\cdot;k)=\arg & \min_{{\bar{\bm {u}}}(\cdot;k)}  J(\bm{x}(k), \bar{\bm {u}}(\cdot;k))\\
	s.t.\quad& {{{\bar {\bm {x}}}}}(k+i+1;k) = \bm {f}({{\bar {\bm {x}}}}(k+i;k),{{\bar {\bm {u}}}}(k+i;k))\\
    & {{\bar {\bm {u}}}}(k+i;k) \in\mathcal{U} \\
    & {{\bar {\bm {x}}}}(k+i;k) \in\mathcal{X} \\
    & {{\bar {\bm {x}}}}(k+N;k)\in\mathcal{X}_f,
\end{aligned}
\end{equation*}
where $i=0,...,N-1$, $\mathcal{X}_f = \{\bm{x}\in  \mathbb{R}^n|\bm{x}^T \bm{P}\bm{x}\leq \alpha \}\subseteq \mathcal{X}$ is the terminal region, ${\bm {u}}^*(\cdot;k)=\{{\bm {u}}^*(k;k),{\bm {u}}^*(k+1;k),...,{\bm {u}}^*(k+N-1;k)\}$ is the optimal control input sequence.
 
The following standard assumption is made.

\begin{ass}\label{terminal layapunov}
For a system terminal state $\bm x \in \mathcal{X}_f$, there exists a local stabilizing control policy $\bm{\kappa} (\bm x)\in\mathcal{U}$  satisfying
\begin{equation}\label{detailed terminal lyapunov}
\begin{aligned}
 {V_f}({f}({{\bm {x}}},\bm{\kappa} ({\bm {x}}))) -{V_f}({{\bm {x}}})\leq 
 -F({\bm {x}},{\bm{\kappa} ({\bm {x}})}
 ),
\end{aligned}
\end{equation}
with $F(\bm x,\bm{\kappa} (\bm x))= {\left\| {{{\bm {x}}}} \right\|_{{Q}}^2 + \left\| {{{\bm{\kappa} (\bm x)}}} \right\|_{{R}}^2}$.
\end{ass}
Here, given the initial system state $\bm x(k)$ and a control sequence $\bm{U}(k)=\bar{\bm {u}}(\cdot;k)$, the state sequence $\bar {\bm {x}}(k+i;k)=\bm{\phi}(i,\bm{U}(k),\bm{x}(k))$ satisfies:

\begin{equation}\label{ODE}
\begin{aligned}
&\bm{\phi}(i,\bm{U}(k),\bm{x}(k))\\
&=\bm f(...\bm f(\bm{x}(k),\bm \mu(0,\bm{U}(k))),...,\bm \mu(i-1,\bm{U}(k))),
\end{aligned}
\end{equation}
where $i=0,...,N-1$ and $\bm \mu(i,\bm{U}(k))=\bar{\bm {u}}(k+i;k)$.

On this basis, we rewrite problem $\mathcal{P}1$ as the following compact form to facilitate discussions:

{\bf Problem} $\mathcal{P}2$ :
\begin{equation*}\label{OCP2}
\begin{aligned}
	\bm{U}^*(k)=\arg & \min_{\bm{U}(k)}  J(\bm{x}(k), \bm{U}(k))\\
	{\rm s.t.}\quad &\bm G(\bm x(k),\bm{U}(k))\leq \bm 0\\
    & {{\bar {\bm {x}}}}(k+N;k)\in\mathcal{X}_f.
\end{aligned}
\end{equation*}

According to (\ref{constrain}), $\bm G(\bm x(k),\bm{U}(k))$ is defined as:

\begin{equation}\label{C2}
\bm G(\bm x(k),\bm{U}(k))=
\left(
\begin{gathered}
\bm g(\bm{\phi}(0,\bm{U}(k),\bm{x}(k)))\\
\vdots\\
\bm g(\bm{\phi}(N-1,\bm{U}(k),\bm{x}(k)))\\
\bm \mu(0,\bm{U}(k))-\bm{u}_{max}\\
\vdots\\
\bm \mu(N-1,\bm{U}(k))-\bm{u}_{max}\\
\bm{u}_{min}-\bm \mu(0,\bm{U}(k))\\
\vdots\\
\bm{u}_{min}-\bm \mu(N-1,\bm{U}(k))\\
\end{gathered}
\right)
\end{equation}

\begin{figure}[!t]
\centering
\includegraphics[width=2.5in]{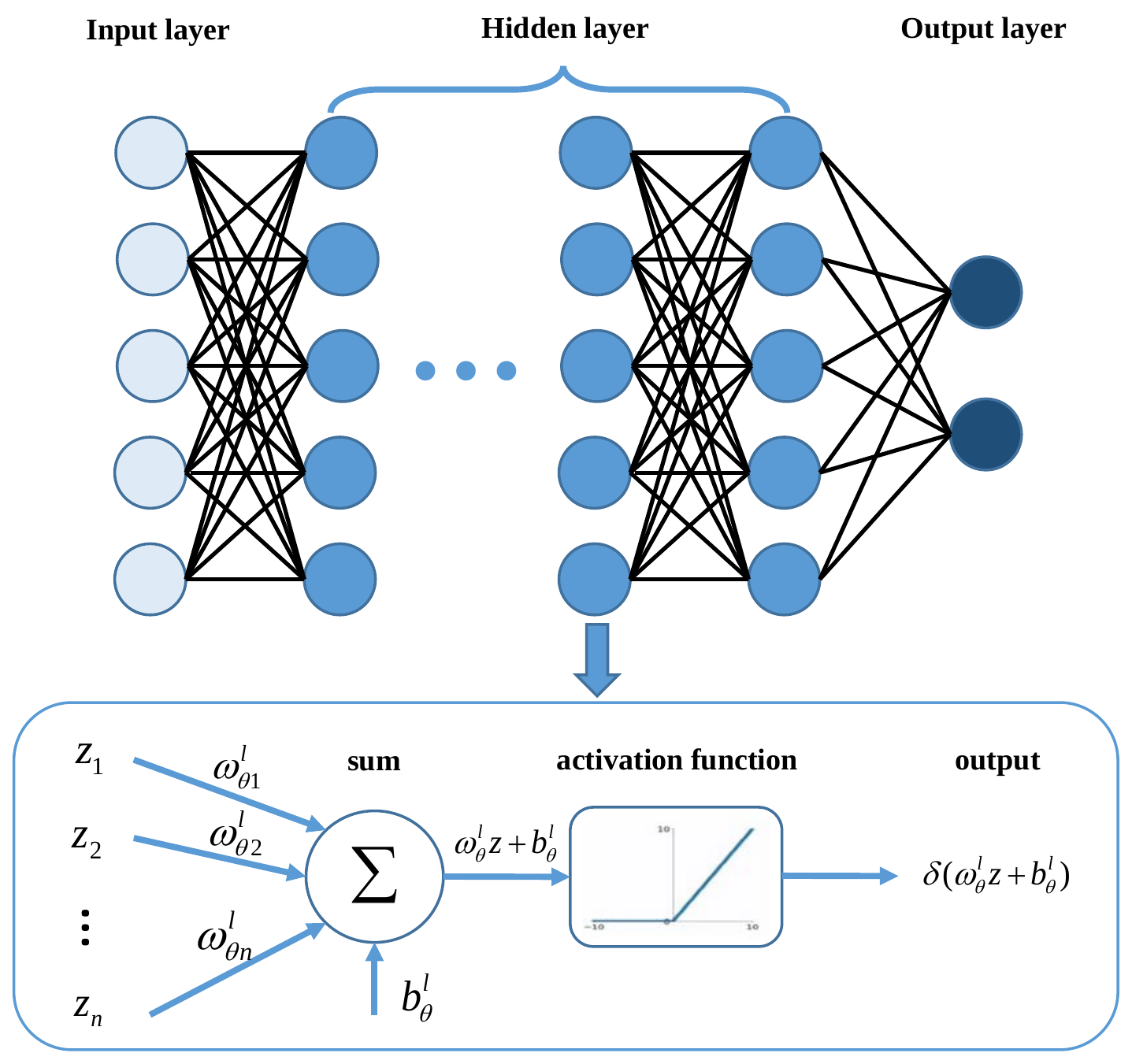}
\caption{Feedforward neural networks.}
\label{Neural Network}
\end{figure}

\subsection{Feedforward Neural Network}\label{subsection:deep neural network}

This paper adopts feedforward neural networks (FNN), a type of widely used DNN model, to learn the control policy of NMPC. As shown in figure \ref{Neural Network}, FNN provides a L-layer neural networks architecture, which forms a mapping from the input $\bm z$ to the output $\bm{f_\theta}(\bm z)$. The detailed formulation is as follows:
\begin{equation}\label{NN}
\begin{aligned}
	\bm {f_\theta}(\bm z)=\bm h_L(\bm h_{L-1}(\cdots \bm h_2(\bm h_1(\bm z)))),
\end{aligned}
\end{equation}
with the function
\begin{equation}\label{NN_}
\begin{aligned} 
\bm h_l(\bm z)=\delta(\bm \omega_{\theta}^{l} \bm z + \bm b_{\theta}^{l}).
\end{aligned}
\end{equation}

Here, $\delta$ is the activation function of DNN. The commonly used activation functions are: sigmoid function $\delta(x)=1/(1+e^{(-x)})$, tanh function $\delta(x)=(2/(1+e^{(-2x)})) - 1$ and ReLU function $\delta(x)=\max(0,x)$ \cite{Ian2016}.

To fitting a function $\bm f(\bm z)\in\mathcal{F}$ with the given training data $\{\bm z^{(i)},\bm f(\bm z^{(i)})\}_{i=1}^M$, the DNN optimizes its parameter $\bm\theta:= \{\bm \omega_{\theta}^{l}, \bm b_{\theta}^{l}\}_{l=1}^L$ in (\ref{NN_}) by minimizing a designed loss function $\mathcal{L}(\cdot)$. It has been proven that the multilayer FNN can approximate any finite discontinuous functions by an arbitrary accuracy\cite{Hornik1989,Leshno1993}. That is, DNN is a universal approximator for Borel measurable function, which is suitable for the NMPC policy learning task in this paper.

\section{POLICY Learning MPC}\label{section:LMPC policy}
\subsection{Policy Learning MPC}\label{subsection:learning approximate MPC policy}

In the framework of NMPC, Problem $\mathcal{P}2$ is recursively solved while the new system state $\bm x(k)$ is obtained, and only the first element of $\bm{U}^*(k)$ is applied to the system. Therefore, we can establish a function map between the system state $\bm x(k)$ and $\bm{U}^*(k)$ as follows:

\begin{equation}\label{CMap}
\begin{aligned}
\bm{U}^*(k)=\bm \pi(\bm x(k)).
\end{aligned}
\end{equation}

Different from the traditional NMPC algorithms utilizing the online optimization method, we use the DNN to fit $\bm \pi(\bm x(k))$ by $\bm{\pi_\theta}(\bm x(k))$ and denote
\begin{equation}\label{NMap}
\begin{aligned}
\bm{U_\theta}(k)=\bm{\pi_\theta}(\bm x(k)),
\end{aligned}
\end{equation}
where $\bm{U_\theta}(k)=\{\bm {u_\theta}(k;k),\bm {u_\theta}(k+1;k),...,\bm {u_\theta}(k+N-1;k)\}$. As introduced in Section \ref{section:problem formulation}, the parameter $\bm \theta$ of DNN is $\{\bm \omega_{\theta}^{l}, \bm b_{\theta}^{l}\}_{l=1}^L$. The detailed policy learning processes for NMPC are shown in Fig. \ref{figure20}.

\begin{figure}[!t]
\centering
\includegraphics[width=3.5in]{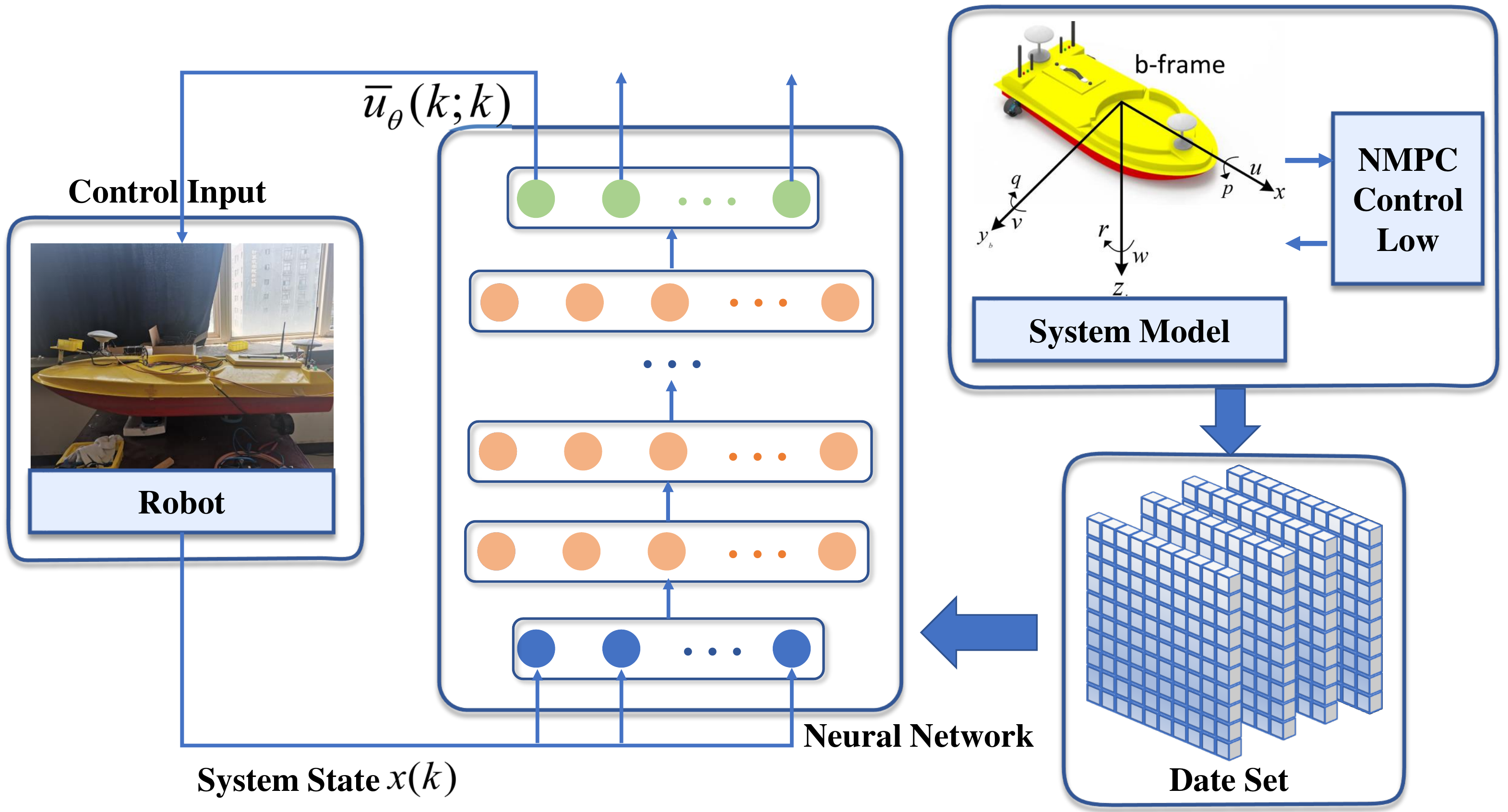}
\caption{policy Learning for MPC.}
\label{figure20}
\end{figure}

The loss function of deep learning is designed as
\begin{equation}\label{loss function}
\begin{aligned}
\mathcal{L}({\bm{U}^*(k)}, {\bm{U_\theta}(k)})=J(\bm{x}(k), \bm{U_\theta}(k))-J(\bm{x}(k), \bm{U}^*(k)),
\end{aligned}
\end{equation}
so that it can approximate $\bm \pi(\bm x(k))$ more accurately.

It is straightforward that the loss function $\mathcal{L}({\bm{U}^*(k)}, {\bm{U_\theta}(k)})\geq0$ because of the optimality property of $\bm{U}^*(k)$. By utilizing the training data set $\mathcal{D}=\{(\bm x(k), \bm{U}^*(k)\}_{k=1}^{M}$ and the loss function $\mathcal{L}({\bm{U}^*(k)}, {\bm{U_\theta}(k)})$, the supervised learning problem of $\bm{U_\theta}(k)$ is converted into the following optimization problem:

\begin{equation}\label{NN optimization}
\begin{aligned}
{{\bm {\theta}}^*}=\arg & \min_{{{\bm {\theta}}}}  \sum_{k=1}^M\mathcal{L}({\bm{U}^*(k)}, {\bm{U_\theta}(k)}).\\
\end{aligned}
\end{equation}

Considering the constraints of system defined in (\ref{constrain}), we obtain the following optimization problem:

{\bf Problem} $\mathcal{P}3$ :
\begin{equation*}\label{OCP3}
\begin{aligned}
	{{\bm {\theta}}^*}=\arg & \min_{{{\bm {\theta}}}}  \sum_{k=1}^M\mathcal{L}({\bm{U}^*(k)}, {\bm{U_\theta}(k)})\\
	s.t. \quad & \bm G(\bm x(k),\bm{U_\theta}(k))\leq \bm 0.
\end{aligned}
\end{equation*}

\subsection{Dual Optimization Learning Algorithm}\label{subsection:dual optimization learning algorithm}

Traditional learning algorithms only use the unconstrained optimization problems to update $\bm{\theta}$, which may lead to the failure of the learned control policy in practical implementation because of violating constraints. To deal with this problem, we fulfil the system state and input constraints in $\mathcal{P}3$ by means of Lagrange duality theory\cite{Fioretto2020}.

First, an augmented Lagrangian loss function is defined as follows:
\begin{equation}\label{Dual NN optimization}
\begin{aligned}
&\mathcal{L}({\bm{U}^*(k)}, {\bm{U_\theta}(k)},\bm v)\\
=&\mathcal{L}({\bm{U}^*(k)}, {\bm{U_\theta}(k)})+{\bm v}  [\bm G(\bm x(k),\bm{U_\theta}(k))]^+,
\end{aligned}
\end{equation}
where $\bm v = [v_1,...v_{3N}]$ is the dual variable associated with the constraint $\bm G(\bm x(k),\bm{U_\theta}(k))$.

Then, the dual form of problem $\mathcal{P}3$ is:

{\bf Problem} $\mathcal{P}4$ :
\begin{equation*}\label{OCP4}
\begin{aligned}
{\bm {\theta}}^*(\bm v)=\arg \min_{{{\bm {\theta}}}}\sum_{k=1}^M\mathcal{L}({\bm{U}^*(k)}, {\bm{U_\theta}(k)},\bm v),\\
{\bm {v}}^*=\arg \max_{\bm {v}}\min_{\bm{\theta}} \sum_{k=1}^M\mathcal{L}({\bm{U}^*(k)}, {\bm{U_\theta}(k)},\bm v).\\
\end{aligned}
\end{equation*}

Finally, for Problem $\mathcal{P}4$, we integrate the dual gradient descent algorithm and back propagation algorithm to solve it. The concrete steps of the dual optimization learning algorithm is described in Algorithm \ref{alg1}.  

Note that $\bm {v}$ and $\bm{\theta}$ are designed as an alternative update process in the framework of Algorithm \ref{alg1}. The procedure will be finished once the stopping criteria are satisfied, which means that  $\bm {v}$ may not converge to the optimal value at the end of iteration. Therefore, although the existence of the dual variable $\bm {v}$ encourages parameter $\bm{\theta}$ to converge to the approximate optimal value while satisfying the constraints in the process of updating, the non-optimal value  $\bm {v}$ would lead to a slight violation of the constraints in probability. This issue will be further discussed and tackled in Section \ref{subsection:feasibility analysis}.

\begin{algorithm}[ht]
	\caption{Dual Optimization Learning Algorithm}
	\begin{algorithmic}
        \REQUIRE Initial training dataset  $\mathcal{D}=\{(\bm x(k), \bm{U}^*(k)\}_{k=1}^{M}$, initial DNN parameter $\bm \theta$, initial optimal step size of DNN $\varepsilon$, initial dual variable $\bm v$, initial dual updated step size $\alpha=\{\alpha_0,\alpha_1...\}$.
           \FOR {each $j=0,1,2,...$}
		\WHILE {Loss function gradient greater than expected}
		\STATE {Randomly sample $m$ small batch samples $\{(\bm x(1), \bm{U}^*(1)),...,(\bm x(m), \bm{U}^*(m))\}\in \mathcal{D}$}
		\STATE Calculate the output of the DNN in system state $\bm x(k)$: $\bm{U_\theta}(k)= \bm{\pi_\theta}(\bm x_k)$
		\STATE Calculate $J(\bm{x}(k), \bm{U_\theta}(k))$ and $J(\bm{x}(k), \bm{U}^* (k))$ using (\ref{system dynamic}) and (\ref{cost function}). 
		\STATE Calculate $\mathcal{L}({\bm{U}^*(k)}, {\bm{U_\theta}(k)})$ using (\ref{loss function}).
           \STATE Convert the dual form $\mathcal{L}({\bm{U}^*(k)}, \bm{U_\theta}(k),\bm v^j)$ using to (\ref{Dual NN optimization}).
		\STATE Calculate the average loss functions of $m$ Samples $\frac{1}{m}\sum_{k=0}^m\mathcal{L}({\bm{U}^*(k)}, \bm{U_\theta}(k),\bm v^j)$.
		 
		\STATE Calculate the gradient of the average loss functions $\nabla_{\bm {\theta}}\frac{1}{m}\sum_{k=0}^m\mathcal{L}({\bm{U}^*(k)}, \bm{U_\theta}(k),\bm v^j)$ for parameter $\bm \theta$.
		\STATE Update parameters $\bm{\theta}\gets\bm \theta-\varepsilon \nabla_{\bm {\theta}}\frac{1}{m}\sum_{k=0}^m\mathcal{L}({\bm{U}^*(k)}, \bm{U_\theta}(k),\bm v^j)$
		\ENDWHILE
		\STATE Update the Lagrangian dual variable $v_{i}^{j+1} \gets v_i^{j}+\alpha_j \sum_{k=1}^n[\bm G_i(\bm x(k),\bm{U_\theta}(k))]^+ \quad i \in \{1,2,...,3N\}$ 
           \ENDFOR
	\end{algorithmic}
\label{alg1}
\end{algorithm}

\subsection{PL-MPC Algorithm}\label{subsection:LMPC algorithm}

The approximate control policy $\bm {\pi_\theta}(\bm x(k))$ is learned through the training dataset. In order to have a more comprehensive learning control policy, we need training samples that can cover the whole feasible region of MPC.

We use a rejection random sampling algorithm to sample the initial state ${\bm x}(0)$ of control trajectories in the feasible region. In order to reduce the similarity of the samplings and improve the coverage of samples in the feasible region, we determine whether to adopt the new sample by comparing the distance between the new sample $\bm x^k(0)$ and the existing one.

Assuming that there are $h$ existing samples $\{\bm x^1(0),\bm x^2(0)...,\bm x^h(0)\}$, their center of gravity can be expressed as:

\begin{equation}\label{gravity}
\begin{aligned}
\bm x^g(0)=\frac{1}{h}\sum_{i=1}^h \bm x^i(0).
\end{aligned}
\end{equation}

We can get the distance from the new sampling point $\bm x_0^k$ to the center of gravity $\bm x_0^g$ as:

\begin{equation}\label{dis}
\begin{aligned}
dis=\Vert\bm x^k(0)-\bm x^g(0)\Vert_2.
\end{aligned}
\end{equation}
If $dis>\tau_k$ is satisfied, $\bm x^k(0)$ is taken as the new initial point, otherwise the initial point will be sampled again. The minimum distance $\tau_k$ decreases with the increase of sampling number $k$ as
\begin{equation}\label{tau}
\begin{aligned}
\tau_k=\gamma^{[\frac{k}{K}]}\tau.
\end{aligned}
\end{equation}
Here, $\tau>0$ is a constant value used to filter the near sampling points and $\gamma$ is a discount factor satisfying $0<\gamma<1$. $K$ is a positive integer, which means that $\tau_k$ is updated every $K$ samples.

Next, the center of gravity is updated as follows:

\begin{equation}\label{up}
\begin{aligned}
\bm x^g(0)=\frac{1}{k}\bm x^k(0)+\frac{k-1}{k}\bm x^g(0).
\end{aligned}
\end{equation}

Using the above algorithm, we obtain a series of initial points ${\bm x}(0)$. In order to obtain training samples, we use the NMPC algorithm to get L-step control trajectory $\bm \tau(\bm x(0))=\{(\bm x(0),\bm{U}^*(0)),(\bm x(1),\bm{U}^*(1)),...,(\bm x(L),\bm{U}^*(L))\}$. We combine control trajectory from $H$ different initial states to get the training dataset
\begin{equation}\label{data set}
\begin{aligned}
\mathcal{D} = \bm \tau(\bm x^1(0)) \cup \bm \tau(\bm x^2(0)) \cup ...\cup \bm \tau(\bm x^H(0)).
\end{aligned}
\end{equation}
The same method is used to generate the test set for supervised learning.

There is a correlation between the sample data obtained by the above method, because the control trajectory is obtained in time sequence. In order to break the association of training data and make it independent and identically distributed (i.i.d), we use a data buffer to store and uniform randomly sample training data.

The training goal of the supervised learning algorithm is that the empirical error of the DNN loss function reaches the expected value. We reduce the empirical error of DNN training by increasing the number of training samples in the training dataset and increasing the number of training steps, until the experience error satisfies the condition.

When the DNN is trained, it will be deployed into the controller. To ensure stability, design the state region $\mathcal{X}_0 = \{\bm{x}\in  \mathbb{R}^n|\bm{x}^T \bm{Q}\bm{x}\leq \gamma\alpha \}\subseteq \mathcal{X}_f$, where $\gamma \in (0,1)$. If $\bm x(k) \in \mathcal{X}_0$, there exists $\bm{u}(k)=\bm{\kappa} (\bm {x}(k))$, otherwise use approximate control policy $\bm {\pi_\theta}(\bm x(k))$. The detailed learning algorithm is presented in Algorithm \ref{alg2}.

\begin{algorithm}[ht]
	\caption{PL-MPC Algorithm}
	\begin{algorithmic}
	    \REQUIRE Initialize training data buffer.\\
           \STATE\textbf{Offline:}
           \STATE Use rejection random sampling sampling method to randomly sample initial state $\bm x(0)$.
		\STATE Control trajectory $\bm \tau(\bm x(0))$ obtained by NMPC algorithm. 
		\STATE Store Control trajectory data in data buffer.
		\STATE Collect batch data from data buffer.
           \STATE Train DNN Using Algorithm \ref{alg1}. 
		\STATE Verify the learning effect by using empirical error
		\IF {The empirical error is less than the expected error}
           \STATE Stop training.
		\RETURN {The approximate control policy $\bm \pi_\theta$.}
		\ELSE
		\STATE Return to offline step 1 and continue training.
		\ENDIF
          \STATE \textbf{Online:}
          \FOR {$k=0,1,2,3...$} 
          \STATE Obtain the state of the controlled object $\bm x(k)$.
          \IF{$\bm x(k) \in \mathcal{X}_0$}
          \STATE Set control input $\bm{u}(k)=\bm{\kappa} (\bm {x}(k))$.
          \ELSE
          \STATE Set $N$ control sequences $\bm{U}(k)=\bar{\bm {u}}(\cdot;k)=\bm {\pi_\theta}(\bm x(k))$.
          \STATE Set control input $\bm {u}(k) = \bar{\bm {u}}(k;k)$.
          \ENDIF
          \ENDFOR
	\end{algorithmic}
\label{alg2}
\end{algorithm}

\section{FEASIBILITY HANDLING AND STABILITY ANALYSIS}\label{section:feasibility and stability analysis}
\subsection{Feasibility Handling}\label{subsection:feasibility analysis}

This subsection deals with the constraint satisfaction issue of Problem $\mathcal{P}3$ to ensure the feasibility of the proposed Algorithm \ref{alg2}. As mentioned at the end of Section \ref{subsection:dual optimization learning algorithm}, although the dual optimization learning algorithm provides an effective method to handle the constraints of PL-MPC, there exists a small chance that the approximate solution may violate constraints. In this particular case, we use the proximal operator to guarantee the feasibility of PL-MPC algorithm.

By adding a projection mapping layer on the basis of the original DNN, the proximal operator is able to translate the approximate solution $\bm {U_p}(k)$ generated by DNN into a high-quality feasible solution $\bm {U_\theta}(k)$. The projection mapping layer is designed as follows:

{\bf Problem} $\mathcal{P}5$ :
\begin{equation}\label{OPC5}
\begin{aligned}
\arg & \min_{{\bm U_\theta}(k)} \left\|{\bm {U_\theta}(k)-\bm {U_p}(k)}\right\|_2\\
s.t.\quad & \bm G(\bm x(k),\bm{U_\theta}(k))\leq \bm{0}. \\
\end{aligned}
\end{equation}

Generally, $\bm {U_p}(k)$ belongs to the feasible region by using the dual optimization learning algorithm, i.e., $\bm U_\theta(k)=\bm {U_p}(k)$. When the approximate solution $\bm {U_p}(k)$ does not exist in the feasible region, we can obtain the corresponding feasible solution $\bm {U_\theta}(k)$ by solving the optimization problem $\mathcal{P}5$.

Here, the problem $\mathcal{P}5$ is solved through exterior point penalty function methods. Define a penalty function as:

\begin{equation}\label{P_F}
\begin{aligned}
\bm {Q}(\bm {U_\theta}(k),\mu)=&\left\|{\bm {U_\theta}(k)-\bm {U_p}(k)}\right\|_2+\bm{\mu}([\bm G(\bm x(k),\bm{U_\theta}(k))]^+)^2,
\end{aligned}
\end{equation}
where $\bm{\mu} > \bm{0}$ is the penalty parameter. Obviously, as the increase of $\mu$, the penalty function $\bm {Q}(\bm {U_\theta}(k),\bm \mu)$ will be far away from the optimal value $\bm {Q}({\bm {U_\theta}}^*(k),\bm{\mu})$ if $\bm {U_\theta}(k)$ violates the constraint $\bm G(\bm x(k),\bm{U_\theta}(k))\leq \bm{0}$, which makes the unconstrained problem $\arg \min_{{\bm {U_\theta}(k)}} \bm {Q}(\bm {U_\theta}(k),\bm{\mu})$ equivalent to problem $\mathcal{P}5$. The detailed quadratic penalty method is presented in Algorithm \ref{alg3}.

\begin{algorithm}[ht]
	\caption{Quadratic Penalty Method}
	\begin{algorithmic}
	    \REQUIRE Initialize $\bm \mu_0=\bm 1$ and $\bm \gamma$, starting point $\bm {U_\theta}^0(k)=\bm {U_p}(k)$ and $\bm \xi > 0$.
		\FOR {$i=0,1,2,3...$}
		\STATE Optimize $\bm {Q}(\bm {U_\theta}(k),\bm {\bm{\mu}}_i)$ under $\bm {U_\theta}(k)$ starting at $\bm {U_\theta}^i(k)$, until $ \left\|\nabla_\mathbf u \bm {Q}(\bm {U_\theta}(k),{\bm{\mu}}_i)\right\|< \bm \xi$. 
		\IF {satisfy restraint condition}
		\STATE break.
		\ENDIF
		\STATE ${\bm{\mu}}_{i+1}=\bm \gamma{\bm{\mu}}_i$.
		\STATE Choose new starting point $\bm {U_\theta}^i(k)$.
		\ENDFOR
	\end{algorithmic}
\label{alg3}
\end{algorithm}

Now, by integrating the dual optimization learning algorithm and the proximal operator, there always exists a feasible solution for the problem  $\mathcal{P}3$.

\begin{remark}
Note that after the pre-processing of $\bm U_p(k)$ by the dual optimization learning algorithm, $\bm {U_p}(k)$ is not far away from $\bm {U_\theta}(k)$ in the case of constraint violations. Therefore, a small number of iterations are required to find $\bm {U_\theta}(k)$ in Algorithm \ref{alg3}, which can still obtain a fast solution without much computation resources.
\end{remark}

\subsection{Stability Analysis}\label{subsection:stability analysis}
In this subsection, we develop the stability result of the closed-loop system for the proposed learning algorithm.

We start with considering the loss function in (\ref{loss function}) with the learned control policy $\bm{U_\theta}$ designed in this paper. To improve the control performance of the proposed algorithm, we require the loss function to be less than a positive real number $\varepsilon>0$,
\begin{equation}\label{L_g_e}
\begin{aligned}
\mathcal{L}({\bm {U_\theta}}, {\bm {U^*}})< \varepsilon.
\end{aligned}
\end{equation}

In the policy learning procedure, due to the randomness of learning data and samples, the loss function $\mathcal{L}({\bm {U_\theta}}, {\bm {U^*}})$ is a random variable. In what follows, we develop a result ensuring (\ref{L_g_e}) holds in probability.

\begin{thm}\label{Theorem1}
Suppose the training data input $\bm {x}(k)$ satisfies the uniform distribution and the training data samples are sampled independently from this distribution. For the learned control policy $\bm{U_\theta}$ generated by Algorithm \ref{alg2} and the optimal one $\bm {U^*}$, given $\delta\in(0, 1]$ and $\varepsilon>0$, there exists a large number of training samples $M$, such that:

\begin{equation}\label{MK10}
\begin{aligned}
P\{\mathcal{L}({\bm {U_\theta}}, {\bm {U^*}})< \varepsilon\}\geq 1-\frac{R_{emp}(\bm{U_\theta}, \bm {U^*})}{(1-\sqrt{\frac{c-1}{M\delta}})\varepsilon},
\end{aligned}
\end{equation}
where $R_{emp}(\bm{U_\theta}, \bm {U^*})$ is the empirical error of $M$ training data samples defined as follows:

\begin{equation}\label{R_emp}
\begin{aligned}
&R_{emp}(\bm{U_\theta}, \bm {U^*})\\
=&\frac{1}{M}\sum_{k=1}^M(J(\bm{x}(k), \bm {U_\theta}(k))-J(\bm{x}(k), {\bm{U^*}(k)})),
\end{aligned}
\end{equation}
$c$ is a constant value satisfying $c\geq 1$.
\end{thm}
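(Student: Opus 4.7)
The plan is to combine Markov's inequality applied to the non-negative random loss $\mathcal{L}(\bm{U_\theta},\bm{U^*})$ with Chebyshev's inequality applied to the empirical mean $R_{emp}$. Let $R := E[\mathcal{L}(\bm{U_\theta},\bm{U^*})]$ denote the true expected loss under the uniform distribution of $\bm{x}(k)$. Because the training samples are drawn i.i.d.\ from the same distribution, $R_{emp}$ is unbiased for $R$ and satisfies $V[R_{emp}] = V[\mathcal{L}]/M$. I would interpret the constant $c\ge 1$ appearing in the statement as the scale-invariant second-moment bound $V[\mathcal{L}] \le (c-1)R^2$ (equivalently $E[\mathcal{L}^2] \le c R^2$); this is well-posed because $\mathcal{L}\ge 0$ holds by the optimality of $\bm{U^*}$ in $\mathcal{P}2$ and $\mathcal{L}$ is bounded on the compact sets $\mathcal{U}\times\mathcal{X}$.

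First I would apply Markov's inequality to the non-negative scalar random variable $\mathcal{L}(\bm{U_\theta},\bm{U^*})$:
\begin{equation*}
P\{\mathcal{L}(\bm{U_\theta},\bm{U^*}) < \varepsilon\} \;\ge\; 1 - \frac{R}{\varepsilon}.
\end{equation*}
This reduces the task to upper-bounding the unknown true risk $R$ in terms of the observable empirical risk $R_{emp}$ with controlled probability.

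Second, I would apply Chebyshev's inequality to the sample mean. Taking the deviation level $t = R\sqrt{(c-1)/(M\delta)}$ and using $V[\mathcal{L}]\le (c-1)R^2$ together with $V[R_{emp}]=V[\mathcal{L}]/M$ gives
\begin{equation*}
P\{|R_{emp}-R|\ge t\} \;\le\; \frac{V[\mathcal{L}]}{M t^2} \;\le\; \delta,
\end{equation*}
so with probability at least $1-\delta$ we obtain $R_{emp} \ge R\bigl(1-\sqrt{(c-1)/(M\delta)}\bigr)$. Provided $M$ is large enough for the parenthesized factor to be strictly positive, this rearranges to $R \le R_{emp}/\bigl(1-\sqrt{(c-1)/(M\delta)}\bigr)$. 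Substituting this bound into the Markov inequality above yields (\ref{MK10}) directly.

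The main obstacle is not the mechanics of the two inequalities but the independence required to invoke Chebyshev: the network $\bm{U_\theta}$ is fit from the training set, so a priori the summands of $R_{emp}$ are coupled with $\bm{U_\theta}$. I would address this by freezing $\bm{U_\theta}$ after Algorithm \ref{alg1} terminates and evaluating $R_{emp}$ on an i.i.d.\ batch drawn from the uniform distribution assumed in the hypothesis, which makes each $\mathcal{L}(\bm{U_\theta}(k),\bm{U^*}(k))$ an i.i.d.\ copy with mean $R$ and variance $V[\mathcal{L}]$. A secondary point is justifying the existence of a finite $c\ge 1$: boundedness of $\mathcal{L}$ together with $R>0$ in the non-trivial regime makes $c := E[\mathcal{L}^2]/R^2$ a well-defined finite number that is at least $1$ by Jensen's inequality; this is the only place where the compactness of $\mathcal{X}$ and $\mathcal{U}$ and the smoothness of $\bm{f}$ from Assumptions \ref{assumption:1}--\ref{assumption:2} enter the argument.
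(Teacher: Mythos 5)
Your proposal follows essentially the same route as the paper's own proof: Markov's inequality on the non-negative loss, a second-moment ratio bound $E[\mathcal{L}^2]\leq c\,E^2[\mathcal{L}]$ giving $V[\mathcal{L}]\leq(c-1)R_{gen}^2$, Chebyshev's inequality on the sample mean with the same deviation level $\sqrt{(c-1)/(M\delta)}\,R_{gen}$, and the same rearrangement and substitution into the Markov bound. If anything, you are slightly more careful than the paper on two points it treats loosely --- justifying $c\geq 1$ via Jensen and insisting that $R_{emp}$ be evaluated on data independent of the fitted $\bm{U_\theta}$ --- but the argument is the same.
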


\begin{proof}
Considering the nonnegativity of $\mathcal{L}({\bm {U_\theta}}, {\bm {U^*}})$ and applying the Markov's Inequality to it for $ \varepsilon>0$, we have:

\begin{equation}\label{MK}
\begin{aligned}
&P\{\mathcal{L}({\bm {U_\theta}}, {\bm {U^*}})\geq\varepsilon \}\leq \frac{E[\mathcal{L}({\bm {U_\theta}}, {\bm {U^*}})]}{\varepsilon},
\end{aligned}
\end{equation}
where $E[\mathcal{L}({\bm {U_\theta}}, {\bm {U^*}})]$ is the generalization error of the learning algorithm defined as\cite{Daniel2019}:

\begin{equation}\label{R_gen}
\begin{aligned}
&E[\mathcal{L}({\bm {U_\theta}}, {\bm {U^*}})]
=R_{gen}(\bm{U_\theta}, \bm {U^*})\\
=&\int(J(\bm{x}, \bm{U_\theta})- J(\bm{x}, \bm{U^*}))dF(J(\bm{x}, \bm{U_\theta})|\bm{x}).
\end{aligned}
\end{equation}

Where $F$ is the probability density function of the distribution $\bm{x}(k)$. Then, by utilizing the inequality in (\ref{MK}), the probability of (\ref{L_g_e}) is bounded by:
\begin{equation}\label{MK2}
\begin{aligned}
P\{\mathcal{L}({\bm {U_\theta}}, {\bm {U^*}}) <\varepsilon\}
\geq 1-\frac{R_{gen}(\bm{U_\theta}, \bm {U^*})}{\varepsilon}.
\end{aligned}
\end{equation}

However, the generalization error $R_{gen}(\bm{U_\theta}, \bm {U^*})$ of the learning algorithm usually cannot be determined. To deal with this issue, we derive the upper bound of $R_{gen}(\bm{U_\theta}, \bm {U^*})$ in probability in the following.

First, we show that

\begin{equation}\label{dif}
\begin{aligned}
\frac{E[\mathcal{L}^2({\bm {U_\theta}}, {\bm {U^*}})]}{E^2[\mathcal{L}({\bm {U_\theta}}, {\bm {U^*}})]}\leq c.
\end{aligned}
\end{equation}

Because $\mathcal{L}({\mathbf {u_\theta}(\bm{x})}, {\mathbf {u^*}(\bm{x})})$ can be formulated as:
\begin{equation}\label{variance}
\begin{aligned}
V[\mathcal{L}({\bm {U_\theta}}, {\bm {U^*}})]= E[\mathcal{L}^2({\bm {U_\theta}}, {\bm {U^*}})]-E^2[\mathcal{L}({\bm {U_\theta}}, {\bm {U^*}})],
\end{aligned}
\end{equation}
it can be obtained that $c\geq1$. Since the variance $V[\mathcal{L}({\bm {U_\theta}}, {\bm {U^*}})]$ is bounded and greater than zero\cite{Yang2020}, there exists a positive integer $c\geq1$ such that the inequality in (\ref{dif}) holds. Since $\mathcal{L}({\bm {U_\theta}}, {\bm {U^*}})\geq 0$ and $\bm x(k)$ satisfy a uniform distribution, we have:

\begin{equation}\label{variance}
\begin{aligned}
E[\mathcal{L}^2({\bm {U_\theta}}, {\bm {U^*}})] \geq \frac{4}{3} E^2[\mathcal{L}({\bm {U_\theta}}, {\bm {U^*}})],
\end{aligned}
\end{equation}
$c$ can take the value $c=2$.

Then, substituting (\ref{dif}) into (\ref{variance}), we obtain:

\begin{equation}\label{dif1}
\begin{aligned}
V[\mathcal{L}({\bm {U_\theta}}, {\bm {U^*}})]\leq&(c-1) E^2[\mathcal{L}({\bm {U_\theta}}, {\bm {U^*}})]\\
\leq&(c-1)R_{gen}^2({\bm {U_\theta}}, {\bm {U^*}}).
\end{aligned}
\end{equation}
Next, by using the inequality in (\ref{dif1}), the upper bound of the variance of $R_{emp}(\bm{U_\theta}, \bm {U^*})$ can be calculated as:
\begin{equation}\label{var}
\begin{aligned}
& V[R_{emp}({\bm {U_\theta}}, {\bm {U^*}})]\\
=& V[\frac{1}{M}\sum_{k=1}^M (J(\bm{x}(k), \bm {U_\theta}(k))-J(\bm{x}(k), {\bm{U^*}(k)}))]\\
=& \frac{1}{M^2}\sum_{k=1}^MV[(J(\bm{x}(k), \bm {U_\theta}(k))-J(\bm{x}(k), {\bm{U^*}(k)}))]\\
=& \frac{1}{M}V[\mathcal{L}({\bm {U_\theta}}, {\bm {U^*}})]\\
\leq& {\frac{c-1}{M}}R_{gen}^2(\bm{U_\theta}, \bm {U^*}).
\end{aligned}
\end{equation}
In addition, the expectation of $R_{emp}({\bm {U_\theta}}, {\bm {U^*}})$ can be calculated as:
\begin{equation}\label{exp}
\begin{aligned}
&E[R_{emp}({\bm {U_\theta}}, {\bm {U^*}})]\\
&=E[\frac{1}{M}\sum_{k=1}^M(J(\bm{x}(k), \bm {U_\theta}(k))-J(\bm{x}(k), {\bm{U^*}(k)}))]\\
&=\frac{1}{M}\sum_{k=1}^ME[(J(\bm{x}(k), \bm {U_\theta}(k))-J(\bm{x}(k), {\bm{U^*}(k)}))]\\
&=\frac{1}{M}\sum_{k=1}^MR_{gen}(\bm{U_\theta}, \bm {U^*})=R_{gen}(\bm{U_\theta}, \bm {U^*}).
\end{aligned}
\end{equation}
By combining the inequality in (\ref{var}) and (\ref{exp}),
we can get:
\begin{equation}\label{chebyshev1}
\begin{aligned}
&P\{|R_{emp}(\bm{U_\theta}, \bm {U^*})-R_{gen}(\bm{U_\theta}, \bm {U^*})|<\sigma\}\\
&\geq 1-\frac{(c-1)R_{gen}^2(\bm{U_\theta}, \bm {U^*})}{\sigma^2M}
\end{aligned}
\end{equation}
where the Chebyshev inequality is utilized.

Setting $\sigma$ as $\sigma=\sqrt{\frac{c-1}{M\delta}}R_{gen}(\bm{U_\theta}, \bm {U^*})$ and substituting it into (\ref{chebyshev1}), we have:

\begin{equation}\label{chebyshev2}
\begin{aligned}
&P\{|R_{emp}(\bm{U_\theta}, \bm {U^*})-R_{gen}((\bm{U_\theta}, \bm {U^*})|\\
&<\sqrt{\frac{c-1}{M\delta}}R_{gen}(\bm{U_\theta}, \bm {U^*})\}\geq 1-\delta.
\end{aligned}
\end{equation}

Finally, considering that the empirical error is less than the generalization error\cite{Kil2002}, the upper bound of $R_{gen}(\bm{U_\theta}, \bm {U^*})$ satisfies the following:

\begin{equation}\label{R_gen1}
\begin{aligned}
P\{&R_{gen}(\bm{U_\theta}, \bm {U^*})\leq\\
 &(1-\sqrt{\frac{c-1}{M\delta}})^{-1}R_{emp}(\bm{U_\theta}, \bm {U^*})\}
\geq 1-\delta.
\end{aligned}
\end{equation}

When $\delta\rightarrow0$ and the number of training samples $M$ is large enough, the probability of $R_{gen}(\bm{U_\theta}, \bm {U^*})\leq(1-\sqrt{\frac{c-1}{M\delta}})^{-1}R_{emp}(\bm{U_\theta}, \bm {U^*})$ approaches to $1$. Applying (\ref{R_gen1}) into (\ref{MK2}), the inequality in (\ref{MK10}) is derived, which completes the proof.
\end{proof}
$\hfill\Box$

\begin{remark}
(\ref{var})  and (\ref{exp}) require the training sampling data $\bm{x_k}$ to be independently and identically distributed (i.i.d), which are widely used in machine learning algorithms\cite{nature,Machine}. In this paper, we set the data buffer and uniform random sampling to ensure the uniform distribution and i.i.d. property of the training sampling data.
\end{remark}

Next, the asymptotic stability in probability of the closed-loop system with the learned policy is presented.

\begin{thm}\label{Theorem2}
Suppose that Assumptions \ref{assumption:1}-\ref{terminal layapunov} hold, consider any state region $\mathcal{X}_0 = \subseteq \mathcal{X}_f$. Then the closed-loop system (\ref{system dynamic}) under the suboptimal control policy ${\bm {U}_\theta}$ obtained by Algorithm \ref{alg2} and Algorithm \ref{alg3} is asymptotically stable in probability.
\end{thm}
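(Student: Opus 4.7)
The plan is to build a piecewise Lyapunov-type argument on the domain of attraction, using $V_f$ inside the terminal region $\mathcal{X}_0$ and the optimal value function $J^*(\bm{x}(k)) := J(\bm{x}(k),\bm{U}^*(k))$ outside of it. Inside $\mathcal{X}_0$ the controller of Algorithm \ref{alg2} applies the local stabilizer $\bm{\kappa}$, so Assumption \ref{terminal layapunov} directly gives $V_f(\bm{x}(k+1))-V_f(\bm{x}(k)) \leq -F(\bm{x}(k),\bm{\kappa}(\bm{x}(k)))$; this is the usual terminal-region stability argument and needs no new work.

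Outside $\mathcal{X}_0$ I would follow the classical shifted-sequence decrease proof, but with a learning-error correction term. At time $k$ the controller applies $\bm{u}(k)=\bar{\bm{u}}_\theta(k;k)$, the first component of the sequence $\bm{U}_\theta(k)$ produced by Algorithms \ref{alg2} and \ref{alg3}. Thanks to the projection performed by Algorithm \ref{alg3}, $\bm{U}_\theta(k)$ is feasible for Problem $\mathcal{P}2$, so in particular $\bar{\bm{x}}(k+N;k)\in\mathcal{X}_f$. I would then take, at time $k+1$, the shifted candidate
$\bar{\bm{U}}(k+1) = \{\bar{\bm{u}}_\theta(k+1;k), \dots, \bar{\bm{u}}_\theta(k+N-1;k), \bm{\kappa}(\bar{\bm{x}}(k+N;k))\}$,
whose feasibility follows from $\mathcal{X}_f \subseteq \mathcal{X}$ and the invariance encoded in Assumption \ref{terminal layapunov}. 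A routine telescoping of the cost together with the descent inequality of Assumption \ref{terminal layapunov} yields
\begin{equation*}
J^*(\bm{x}(k+1)) \leq J(\bm{x}(k+1),\bar{\bm{U}}(k+1)) \leq J(\bm{x}(k),\bm{U}_\theta(k)) - F(\bm{x}(k),\bm{u}(k)),
\end{equation*}
and subtracting $J^*(\bm{x}(k))$ from both sides, while recognizing $J(\bm{x}(k),\bm{U}_\theta(k)) - J^*(\bm{x}(k)) = \mathcal{L}(\bm{U}^*(k),\bm{U}_\theta(k))$, gives the key increment bound
\begin{equation*}
J^*(\bm{x}(k+1)) - J^*(\bm{x}(k)) \leq \mathcal{L}(\bm{U}^*(k),\bm{U}_\theta(k)) - F(\bm{x}(k),\bm{u}(k)).
\end{equation*}

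The last step is to invoke Theorem \ref{Theorem1}: for suitably large $M$ and small $R_{emp}$, the event $\{\mathcal{L}(\bm{U}^*(k),\bm{U}_\theta(k)) < \varepsilon\}$ holds with probability arbitrarily close to one. Outside $\mathcal{X}_0$ the stage cost admits a uniform lower bound of the form $F(\bm{x}(k),\bm{u}(k)) \geq \beta > 0$ coming from $\bm{x}^T Q \bm{x} > \gamma\alpha$, so choosing $\varepsilon < \beta$ makes $J^*$ strictly decrease with high probability at each step. Combined with the deterministic decrease of $V_f$ inside $\mathcal{X}_0$, this yields asymptotic stability of the origin in probability. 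The step I expect to be the main obstacle is making the transition between the two regions rigorous in the probabilistic setting: one must argue that the random sequence $J^*(\bm{x}(k))$ behaves like a supermartingale outside $\mathcal{X}_0$ and therefore enters $\mathcal{X}_0$ in finite time with probability tending to one, while simultaneously verifying that the low-probability events in which the Theorem \ref{Theorem1} bound fails do not drive the state out of the feasible set—this last safety condition is exactly what the projection in Algorithm \ref{alg3} is designed to guarantee.
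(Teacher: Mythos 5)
Your proposal follows essentially the same route as the paper: the shifted feasible sequence with the terminal controller $\bm{\kappa}$ appended, the descent inequality from Assumption \ref{terminal layapunov}, and the Theorem \ref{Theorem1} bound $\mathcal{L}(\bm{U}^*(k),\bm{U_\theta}(k))<\varepsilon$ with $\varepsilon$ chosen below the stage-cost lower bound $\max_{\bm x\in\mathcal{X}_0}\left\|\bm x\right\|_Q^2$ outside $\mathcal{X}_0$, combined with the local stabilizer inside $\mathcal{X}_0$ and the conclusion that the stability probability tends to one. Your key increment bound $J^*(\bm{x}(k+1))-J^*(\bm{x}(k))\leq \mathcal{L}(\bm{U}^*(k),\bm{U_\theta}(k))-F(\bm{x}(k),\bm{u}(k))$ is exactly the paper's inequalities rearranged, and the supermartingale/region-transition subtlety you flag is in fact left informal in the paper as well.
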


\begin{proof}
Consider that the system state is $\bm{x}(k)$, and the feasible suboptimal solution obtained from DNN is ${\bm {U}_\theta}(k) = [{\bm {u}_\theta}(k;k),...,{\bm {u}_\theta}(k+N-1;k)]$. Then we can get the state of the next moment $\bm{x}(k+1)=\bm{f}(\bm{x}(k),{\bm {u}_\theta}(k;k))$. For state $\bm{x}(k+1)$, there is a feasible control input sequence ${\bm {U}_\theta}(k+1) = [{\bm {u}_\theta}(k+1;k),...,{\bm {u}_\theta}(k+N-1;k),\bm v]$ due to  Assumptions \ref{terminal layapunov}, where $\bm v=\bm{\kappa} (\tilde {\bm {x}}(k+N;k))$. According to \cite{Bemporad}, we can get the following inequality:

\begin{equation}\label{cost function7}
\begin{aligned}
&J(\bm{x}(k+1), {\bm {U}^*(k+1)}) \leq J(\bm{x}(k+1), {\bm {U}_\theta}(k+1))\\
&= J(\bm{x}(k), {{\bm {U}_\theta}(k)})
+({\left\| {{\tilde{{\bm {x}}}}(k+N;k)} \right\|_{{Q}}^2 + \left\| {\bm v} \right\|_{{R}}^2})\\
&-(\left\| {{{{\bm {x}}}}(k;k)} \right\|_{{Q}}^2 + \left\| {{{{\bm {u}}}}(k;k))} \right\|_{{R}}^2)\\
&+\left\| {{{\tilde{\bm {x}}}}(k+N+1;k)} \right\|_{{P}}^2-\left\| {{{\tilde{\bm {x}}}}(k+N;k)} \right\|_{{P}}^2.
\end{aligned}
\end{equation}

Here, $\bm {U}^*(k+1)= [\bar{\bm {u}}^*(k+1;k),...,\bar{\bm {u}}^*(k+N;k)]$ is the optimal solution. According to Assumption 3, it follows that:

\begin{equation}\label{cost function8}
\begin{aligned}
&J(\bm{x}(k+1), {\bm {U}^*(k+1)}) -  J(\bm{x}(k), {{\bm {U}_\theta}(k)})\leq\\
&-(\left\| {{{{\bm {x}}}}(k)} \right\|_{{Q}}^2 + \left\| {{{{\bm {u}}}}(k))} \right\|_{{R}}^2).
\end{aligned}
\end{equation}

Using Theorem \ref{Theorem1}, when the sampling number $M$ is large enough, then there is small enough $\varepsilon>0$, such that $\mathcal{L}({\bm {U_\theta}}, {\bm {U^*}})<  \varepsilon \leq \mathop{max}\limits_{\bm x \in \mathcal{X}_0}\left\| {{{\bm {x}}}} \right\|_{{Q}}^2$. Then we can get $J(\bm{x}, {\bm {U}_\theta}) < J(\bm{x}, {\bm{U^*}})+\mathop{max}\limits_{\bm x \in \mathcal{X}_0}\left\| {{{\bm {x}}}} \right\|_{{Q}}^2$. Since $\left\| {{{{\bm {u}}}}(k))} \right\|_{{R}}^2\geq0$, the inequality (\ref{cost function8}) can be transformed into:

\begin{equation}\label{cost function9}
\begin{aligned}
&J(\bm{x}(k+1), {\bm {U}^*(k+1)}) -  J(\bm{x}(k), {{\bm {U}^*}(k)})<\\
&\mathop{max}\limits_{\bm x \in \mathcal{X}_0}\left\| {{{\bm {x}}}} \right\|_{{Q}}^2-\left\| {{{{\bm {x}}}}(k)} \right\|_{{Q}}^2 .
\end{aligned}
\end{equation}

For $\bm {x}(k) \notin \mathcal{X}_0$, we can get\cite{Johansen}:

\begin{equation}\label{cost function10}
\begin{aligned}
J(\bm{x}(k+1), {\bm {U}^*(k+1)}) -  J(\bm{x}(k), {{\bm {U}^*}(k)})<0.
\end{aligned}
\end{equation}

This indicates that the suboptimal control law is asymptotically stable with respect to a set  $\mathcal{X}_0 = \{\bm{x}\in  \mathbb{R}^n|\bm{x}^T \bm{P}\bm{x}\leq \gamma\alpha\}\in \mathcal{X}_f$. From this we can conclude that the closed-loop system under suboptimal control policy obtained by Algorithm \ref{alg2} is asymptotically stable, because for $\bm x(k) \in \mathcal{X}_0$, $\bm{u}(k)=\bm{\kappa} (\bm {x}(k))$ is a local stabilizing control policy for the closed-loop system.

Finally, from Theorem \ref{Theorem1} it follows that, $P\{\mathcal{L}({\bm {U_\theta}}, {\bm {U^*}}) <\mathop{max}\limits_{\bm x \in \mathcal{X}_0}\left\| {{{\bm {x}}}} \right\|_{{Q}}^2\}\rightarrow1$. As a result, the probability that the system is asymptotically stable will also be close to $1$. The proof is completed.

\end{proof}
$\hfill\Box$


\begin{figure}[!t]
\centering
\includegraphics[width=3.2in]{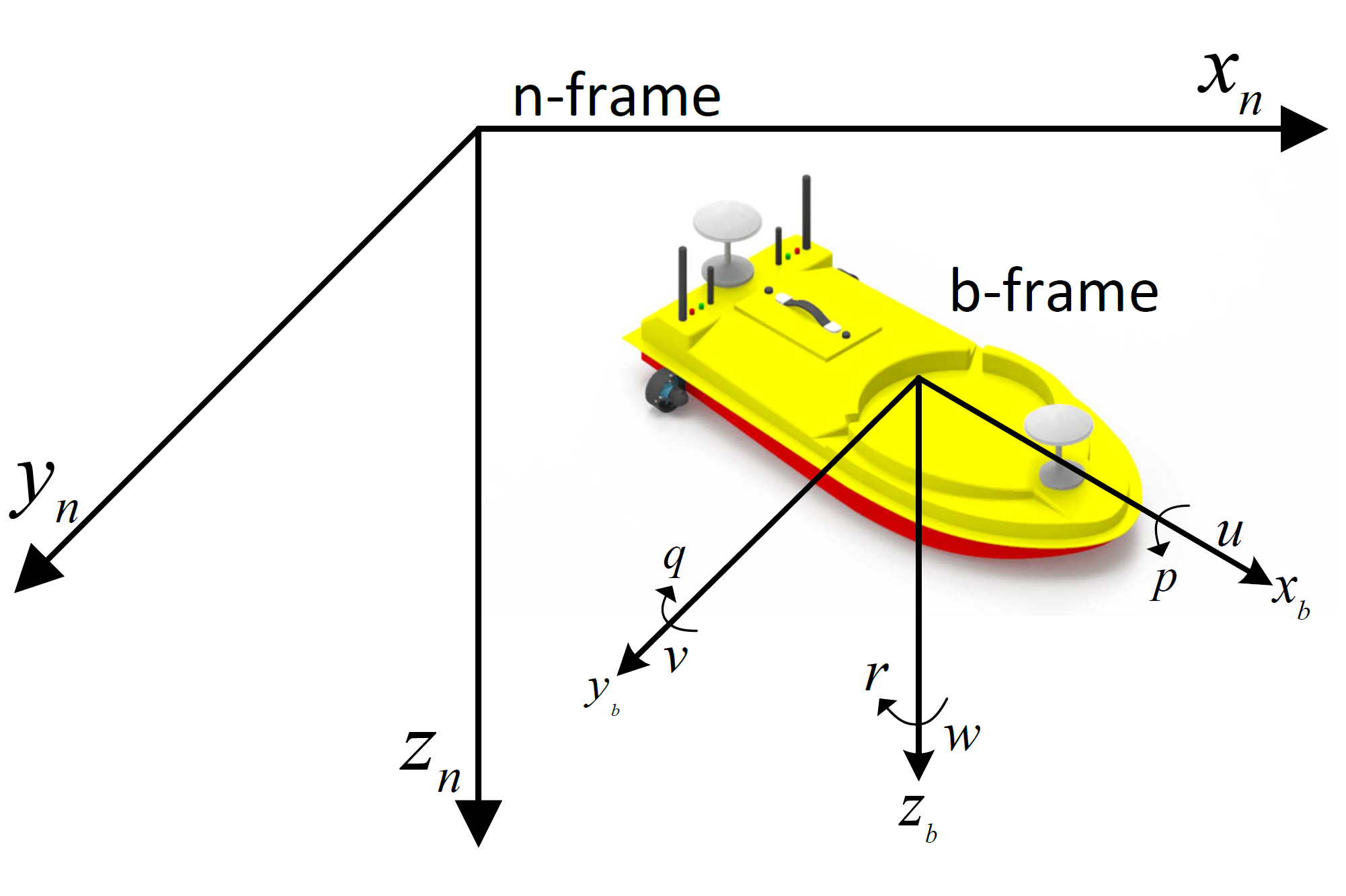}
\caption{State definition of unmanned surface vessel.}
\label{figure0}
\end{figure}

\section{IMPLEMENTATION TO USVs}\label{section:experiments and comparisons}

In this section, we implement the proposed PL-MPC algorithm for an underactuated USV.

\subsection{USV Dynamics}\label{subsection:simulation experiment}

\begin{figure*}[!t]
\centering
\subfloat[]{\includegraphics[width=2.8in]{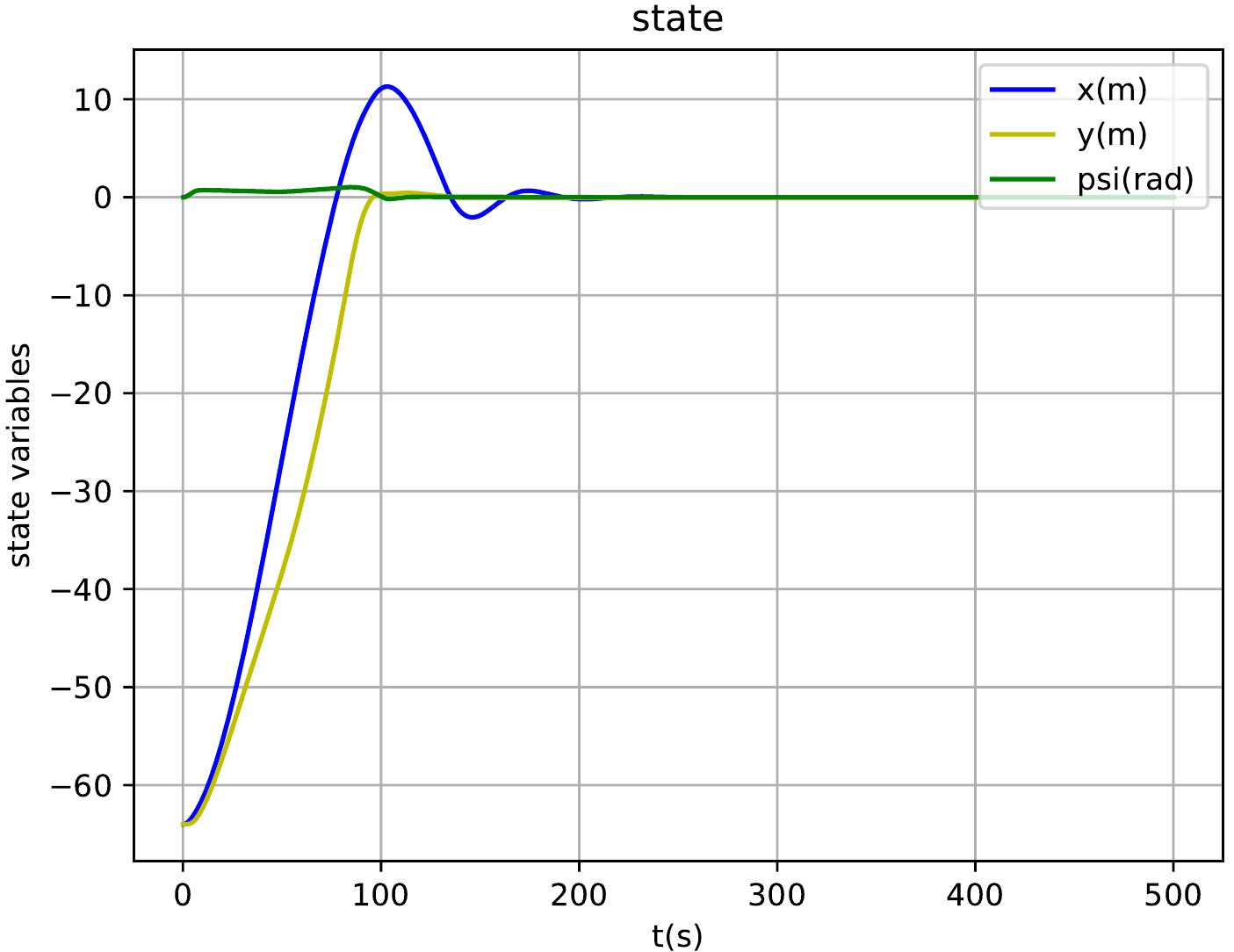}
\label{figure1}}
\hfil
\subfloat[]{\includegraphics[width=3in]{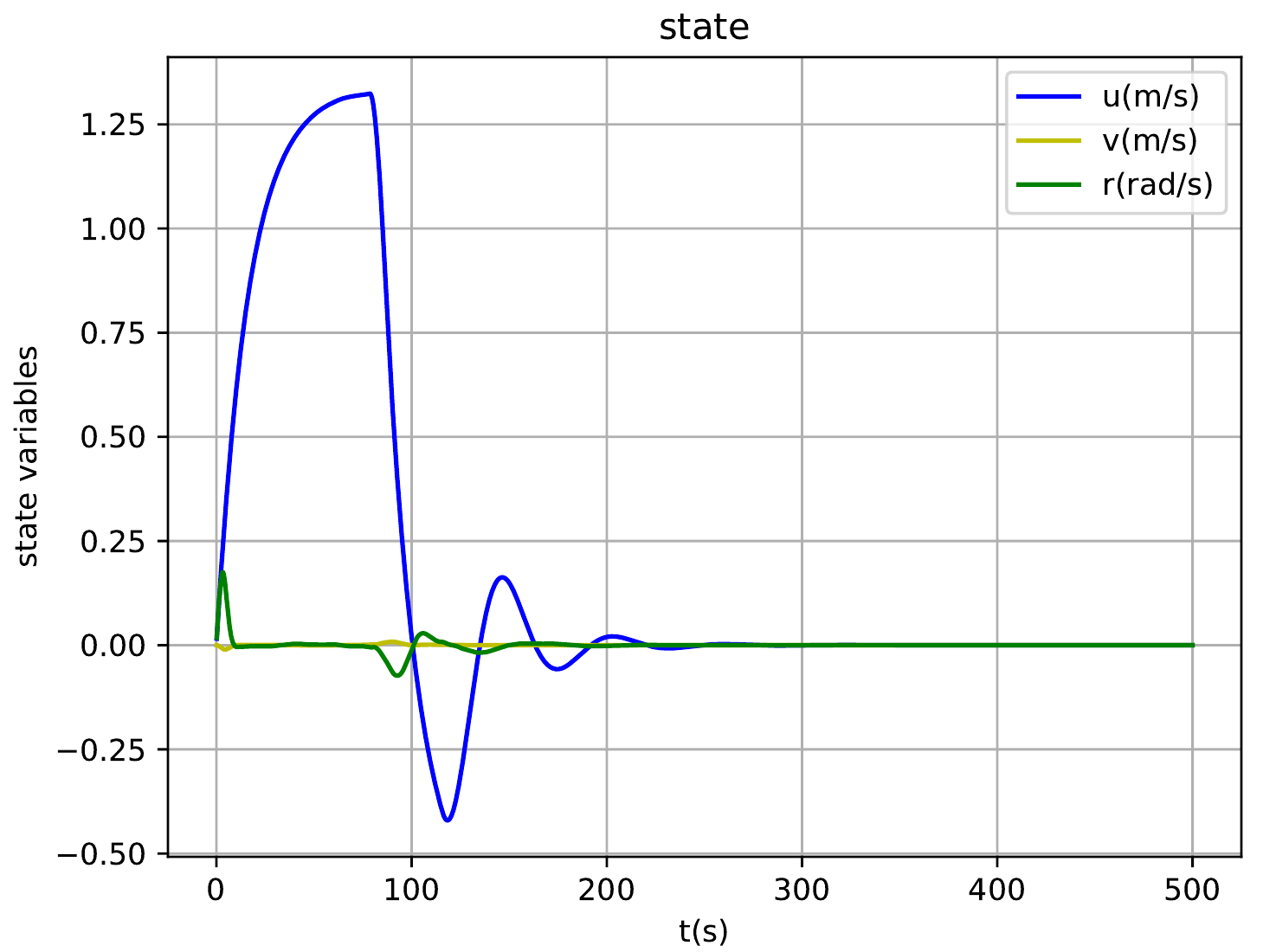}
\label{figure2}}
\caption{State trajectory with initial system state $[-64,-64,0,0,0,0]^T$.  (a) Position trajectory. (b) Velocity trajectory.}
\vfill
\subfloat[]{\includegraphics[width=2.8in]{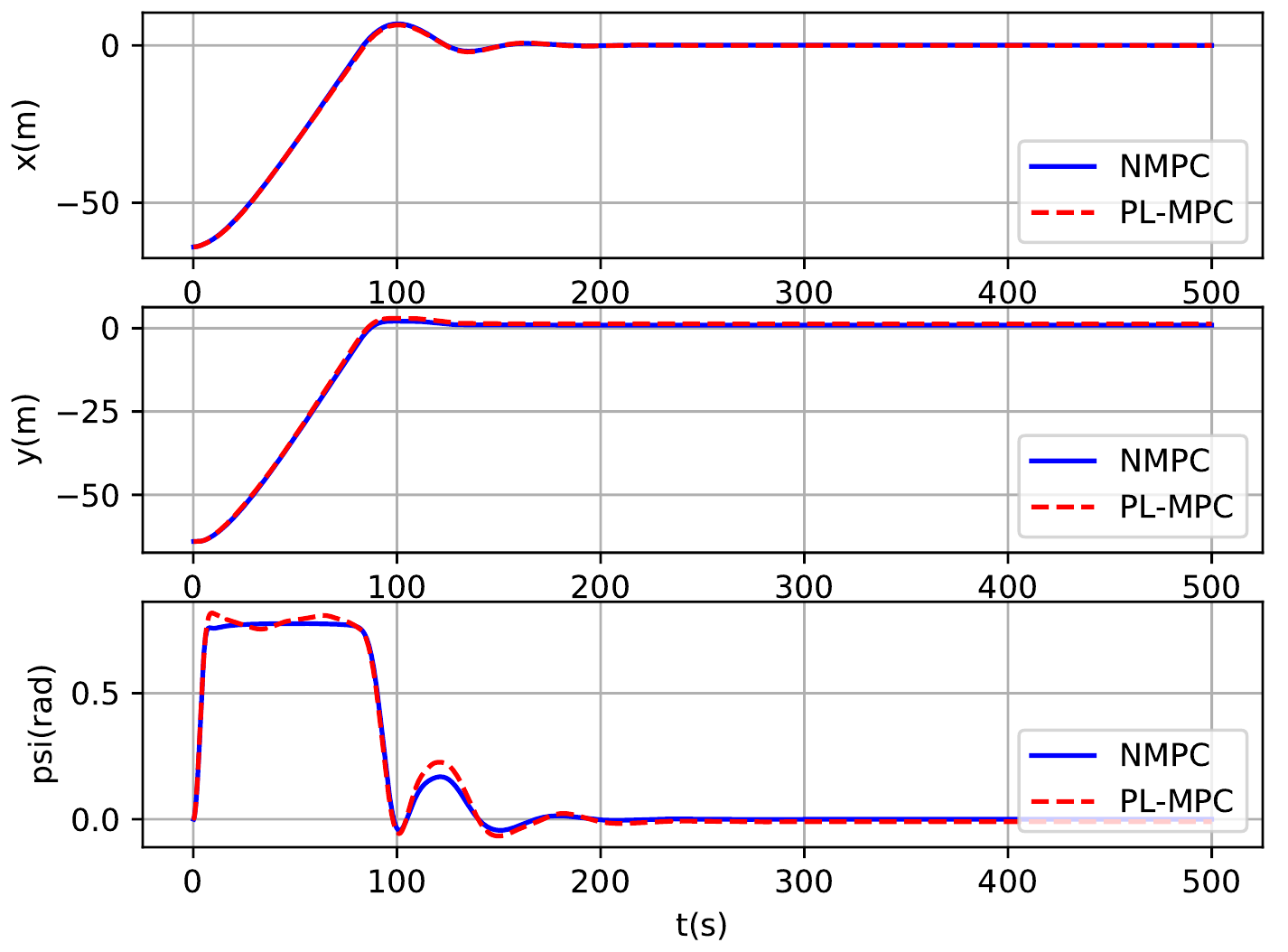}
\label{figure3}}
\hfil
\subfloat[]{\includegraphics[width=2.8in]{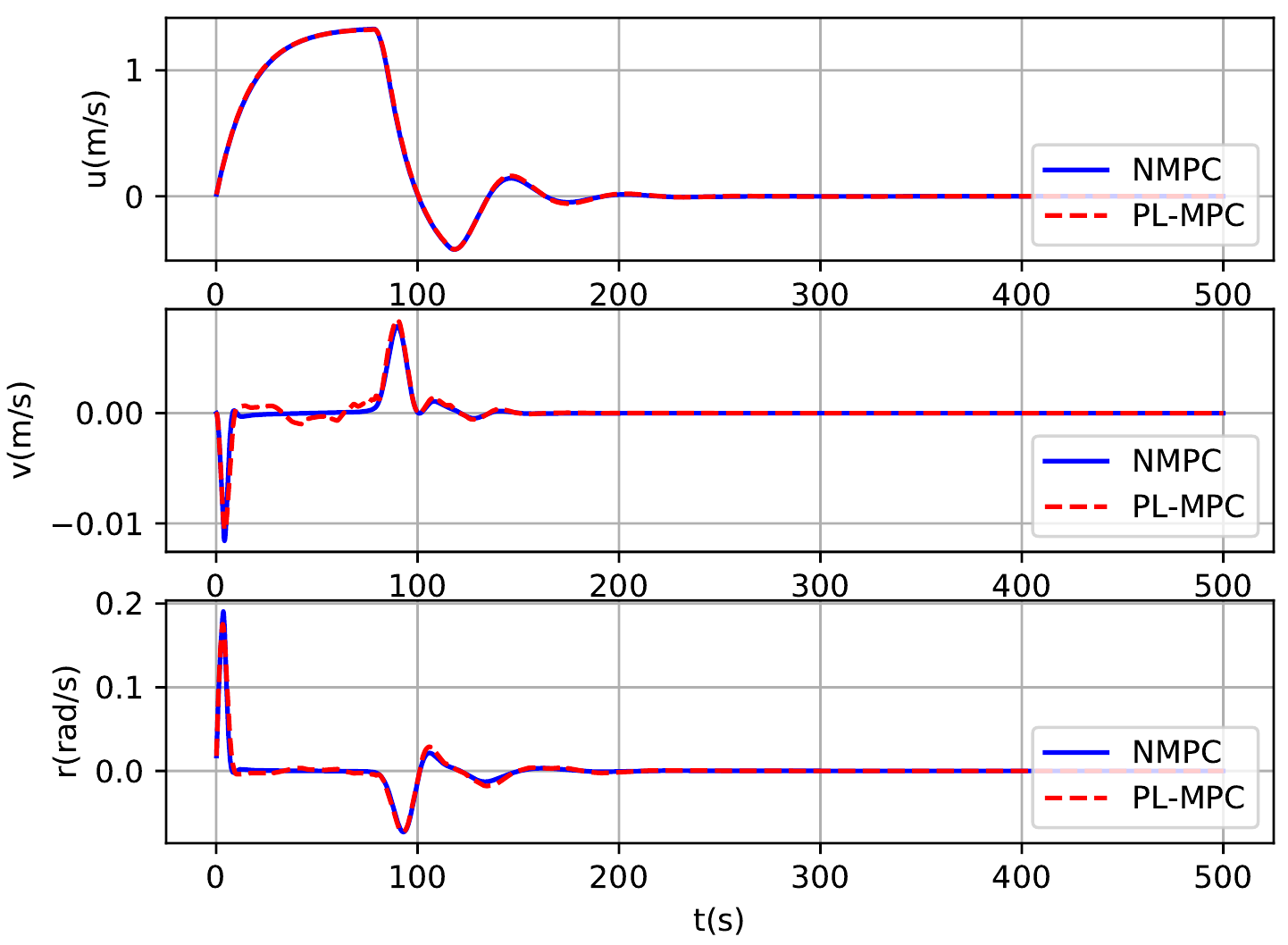}
\label{figure4}}
\caption{Comparison of PL-MPC and NMPC algorithms.  (a) Position comparison. (b) Velocity comparison.}
\vfill
\subfloat[]{\includegraphics[width=2.8in]{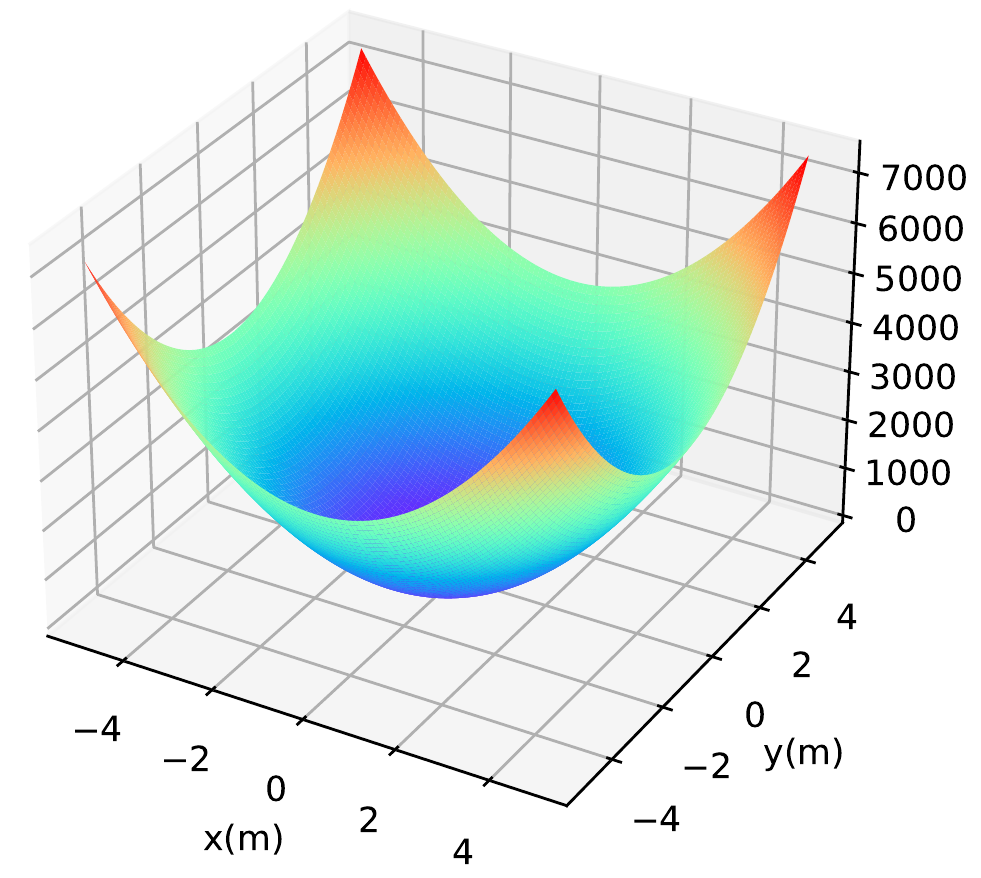}
\label{figure17}}
\hfil
\subfloat[]{\includegraphics[width=3.2in]{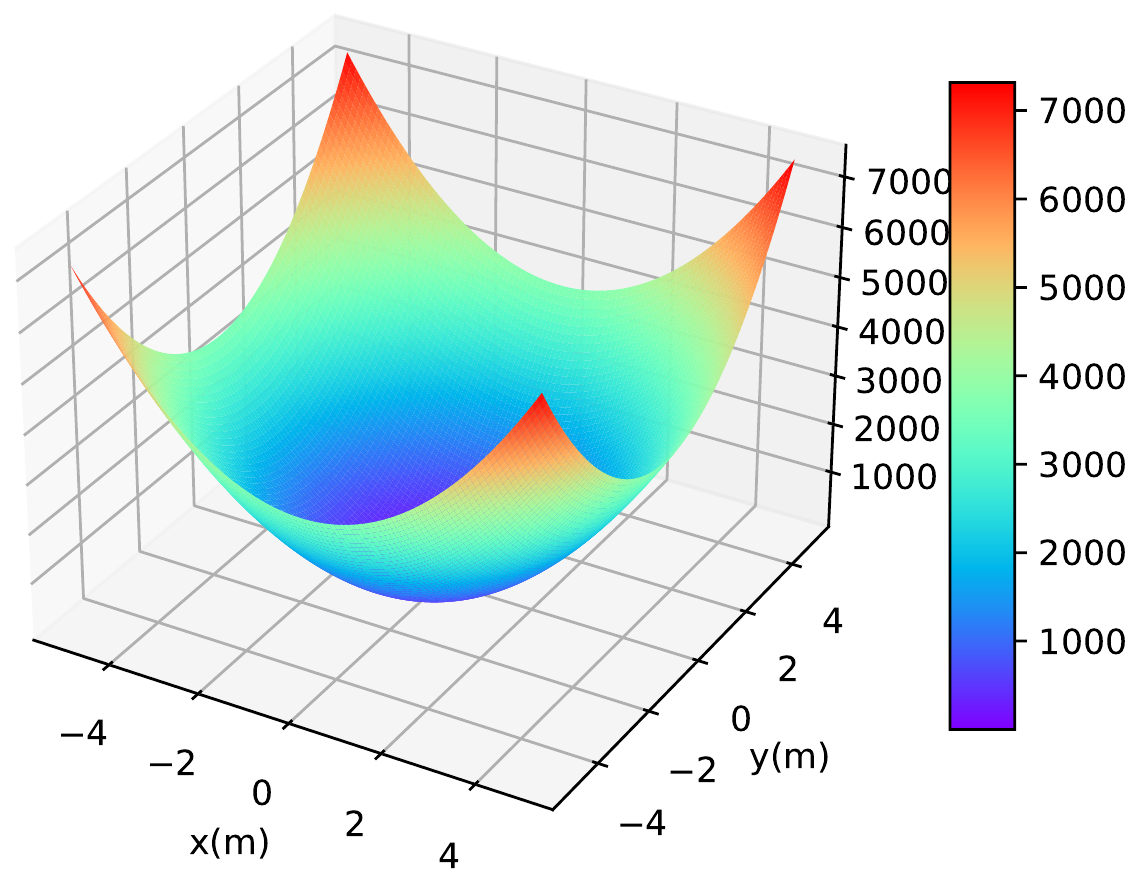}
\label{figure18}}
\caption{Cost function with initial system state $[-5,-5,0,0,0,0]^T$ to $[5,5,0,0,0,0]^T$.  (a)  Optimal cost function of NMPC. (b)  suboptimal cost function of PL-MPC.}
\end{figure*}

To generate training data and validate performance of the proposed algorithm, we identify the model for the USV shown in Fig. \ref{figure0}. The USV is driven by two rear thrusters, which are $T_l$ and $T_r$, respectively. The control input $F = T_l + T_r$, $M = (T_l - T_r)*d_l$, where $d_l$ is axis distance of USV equals to $0.277m$. The state space of the USV is from the north-east-down frame (n-frame) to the body frame (b-frame). It has the nonlinear discrete-time system dynamics as follows:
\begin{align}\label{example_system}
	\begin{bmatrix}
		x_{k+1}\\y_{k+1}\\{\psi}_{k+1}\\u_{k+1}\\v_{k+1}\\r_{k+1}
	\end{bmatrix} =
	\begin{bmatrix}
		x_{k}\\y_{k}\\{\psi}_{k}\\u_{k}\\v_{k}\\r_{k}
	\end{bmatrix} + \Delta T
	\begin{bmatrix}
		\Delta x_{k}\\ \Delta y_{k}\\ \Delta {\psi}_{k}\\ \Delta u_{k}\\ \Delta v_{k}\\ \Delta r_{k}
	\end{bmatrix},
\end{align}
where $[\Delta x_{k}, \Delta y_{k}, \Delta {\psi}_{k}, \Delta u_{k}, \Delta v_{k}, \Delta r_{k}]^T$ are described as:

\begin{align}\label{example_system1}
	\begin{bmatrix}
		\Delta x_{k}\\ \Delta y_{k}\\ \Delta {\psi}_{k}
	\end{bmatrix} =
	\begin{bmatrix}
		\cos{\psi}_{k} & -\sin{\psi}_{k} & 0\\ \sin{\psi}_{k} & \cos{\psi}_{k} & 0\\0 & 0 & 1
	\end{bmatrix}
	\begin{bmatrix}
		 u_{k}\\ v_{k}\\ r_{k}
	\end{bmatrix},
\end{align}

\begin{equation}\label{example_system2}
\left\{
\begin{aligned}
\Delta u_{k} & = (m_{22}v_{k}r_{k}-d_{11}u_{k}+F)/m_{11}\\
\Delta v_{k} & = (-m_{11}u_{k}r_{k}-d_{22}v_{k})/m_{22}\\
\Delta r_{k} & = (m_{11}-m_{22})u_{k}v_{k}-d_{33}r_{k}+M)/m_{33}\\
\end{aligned}
\right.
\end{equation}

Here, the state $[x_k,y_k,\psi_k]^T$ represents x-axis position, y-axis position and orientation in the $n$-frame, respectively. The state $[u_k,v_k,r_k]^T$ is x-axis velocity, y-axis velocity and angular velocity in the $b$-frame. The control input  $ [F, M]^T$ denotes the force and the torque generated by the thrusters.

By practical system identification of the USV, the system control inputs (i.e., the torque and the force limits of the two thrusts) are constrained by $-19.6N \leqslant F \leqslant 39.2N$ and $-5N m \leqslant M \leqslant 5 N m$, respectively. The system state is constrained by $\left|x\right| \leqslant 70m$, $\left|y\right| \leqslant 70m$, $-1m/s \leqslant u \leqslant 2m/s$, $\left|v\right| \leqslant 1m/s$ and $\left|r\right| \leqslant 0.2rad/s$. $m_{11}= 493.8$, $m_{22}= 455.8$, $m_{33}= 55.8$ are the diagonal elements of the inertia matrix. $d_{11}= 29.2$, $d_{22}= 2173.7$, $d_{33}= 17.7$ are the diagonal elements of the damping matrix.

\subsection{Learning NMPC}\label{subsection:learn NMPC}

\begin{figure}[!t]
\centering
\includegraphics[width=3in]{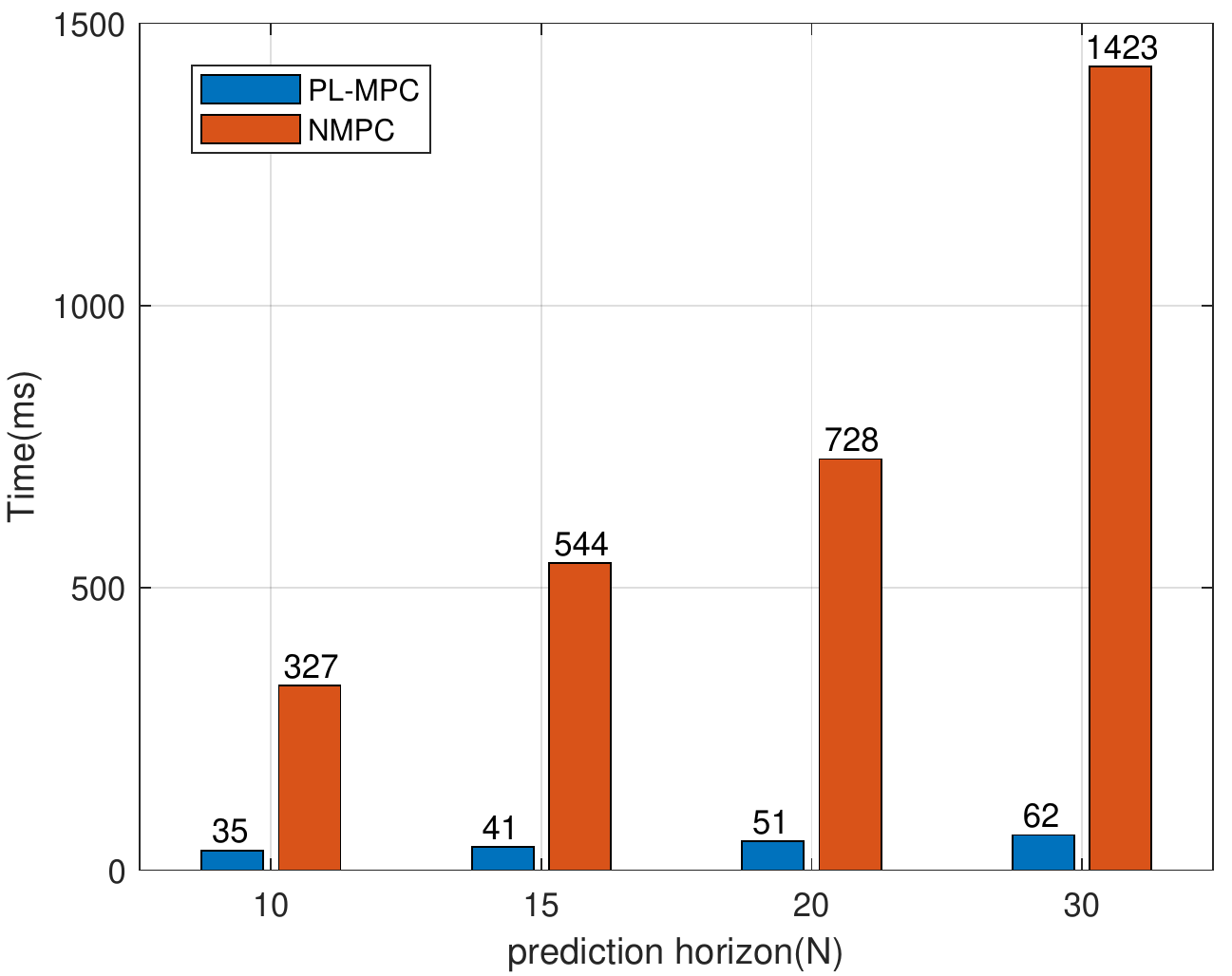}
\caption{Comparison computational time of PL-MPC and NMPC algorithms.}
\label{figure16}
\end{figure}

The training data set is generated by the traditional NMPC algorithm. The detailed parameters are: The three weighting matrices $Q=\rm{diag}(10,10,20,0.1,0.1,0.1)$, $R=\rm{diag}(0.01,0.2)$ and $P=\rm{diag}(10,10,20,0.1,0.1,0.1)$, respectively. The prediction horizon is $N=15$. For such an underactuated USV system, the specific design of the terminal region and terminal control law $\bm{\kappa} ({\bm {x}})$ can refer to \cite{HuipingLi2017ModelPS}. The simulation nonlinear optimization problem of NMPC is solved by CasADi\cite{CasADi}.

For DNN, the node of the input layer is set as $6$, and the node of the output layer is $2N$. There are five layers in the hidden layer. Each layer has $150$, $250$, $250$, $250$ and $50$ nodes, and the activation function is chosen as the ReLU function. We set $\delta=0.01$, $\varepsilon=0.05$ and let $\mathcal{D}$ contain $10^6$ training data, i.e., $M=10^6$. We approximate $E[\mathcal{L}^2({\mathbf {u_\theta}(\bm{x})}, {\mathbf {u^*}(\bm{x})})]$ and $E[\mathcal{L}({\mathbf {u_\theta}(\bm{x})}, {\mathbf {u^*}(\bm{x})})]$ through the test set $\mathcal{T}$ containing $m=10^5$ data.

To verify the learning effect, we apply the PL-MPC algorithm to the simulation model (\ref{example_system}) and compare it with the NMPC algorithm. Fig. \ref{figure1} and Fig. \ref{figure2} present the simulation trajectories with the initial system state $\bm x(0) =[-64,-64,0,0,0,0]^T$, which verifies that the closed-loop system is stable. Fig. \ref{figure3} and Fig. \ref{figure4} compare the system state curves of the NMPC algorithm and the proposed PL-MPC algorithm. It can be observed that the difference of the two algorithms is very small. Fig. \ref{figure17} and Fig. \ref{figure18} show the optimal cost function of the NMPC algorithm and the suboptimal cost function of the PL-MPC algorithm, from which it can be seen that the two algorithms have almost the same cost function.

Simulations of PL-MPC and NMPC with different prediction horizons are also carried out for comparison in the same environment (CPU i7-8550U). By choosing different prediction horizons of MPC, Fig. \ref{figure16} shows the detailed one-step computational time of NMPC and PL-MPC. It can be seen that, the PL-MPC algorithm greatly reduces the computation time compared with the NMPC algorithm, especially with the increase of the prediction horizon.

\subsection{Hardware Implementation}\label{subsection:hardware experiment}

In this section, we implement the designed PL-MPC algorithm to the USV platform and conduct the lake experiments. The experimental equipment and the environment are shown in Fig. \ref{figure5}. In order to verify the effectiveness of the algorithm, we have conducted two experiments corresponding to two control missions: point stabilization and trajectory tracking.

The experimental parameters of point stabilization are the same as those of simulation experiments. By training the policy using the DNN offline with data generated by simulation and deploying it for the USV, we conduct the point stabilization experiments for USV in the lake. Fig. \ref {figure6} is the trajectory of USV point stabilization experiment on the map, and Fig. \ref {figure7} shows the execution process of the proposed method in the experiment.

\begin{figure}[!t]
\centering
\includegraphics[width=3.3in]{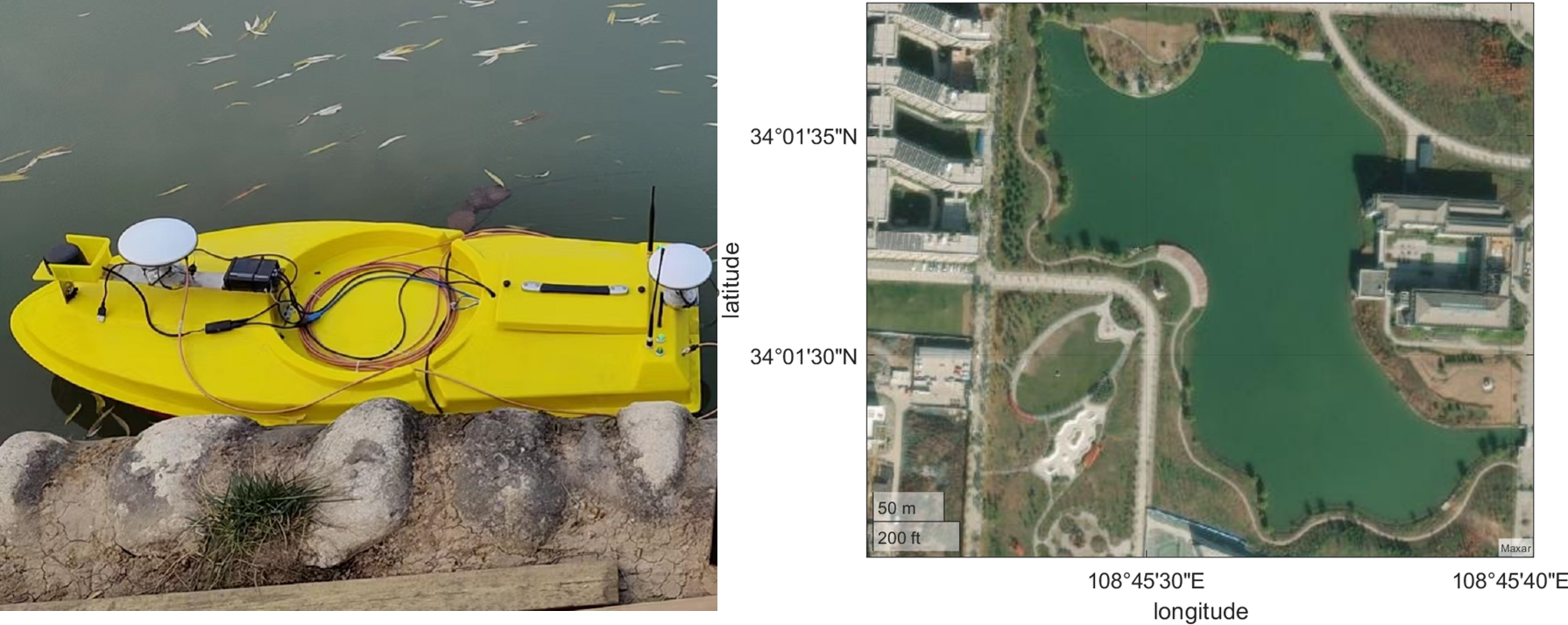}
\caption{USV for experimental and satellite map of experimental environment.}
\label{figure5}
\end{figure}

Fig. \ref {figure8} shows the USV position in the experiment. Under the control algorithm, the USV can move to the equilibrium point and remain stable. Figs. \ref {figure9} - \ref {figure10} show the position error and velocity error, respectively. As shown in Table \ref{tab:table1}, the root mean square (RMS) values of position error are $0.151 m$ and $0.442m$, and the RMS value of angle error is $0.043rad$. The RMS values of velocity error are $0.076m/s$, $0.019m/s$ and $0.036rad/s$, respectively. Note that there exist unavoidably disturbances from waves and winds in the lake when the experiments were conducted. Even though the unknown disturbances, the PL-MPC controller has achieved satisfactory control performance in terms of the relatively small errors in real time experiments.

\begin{figure*}[!t]
\centering
\subfloat[]{\includegraphics[width=3in]{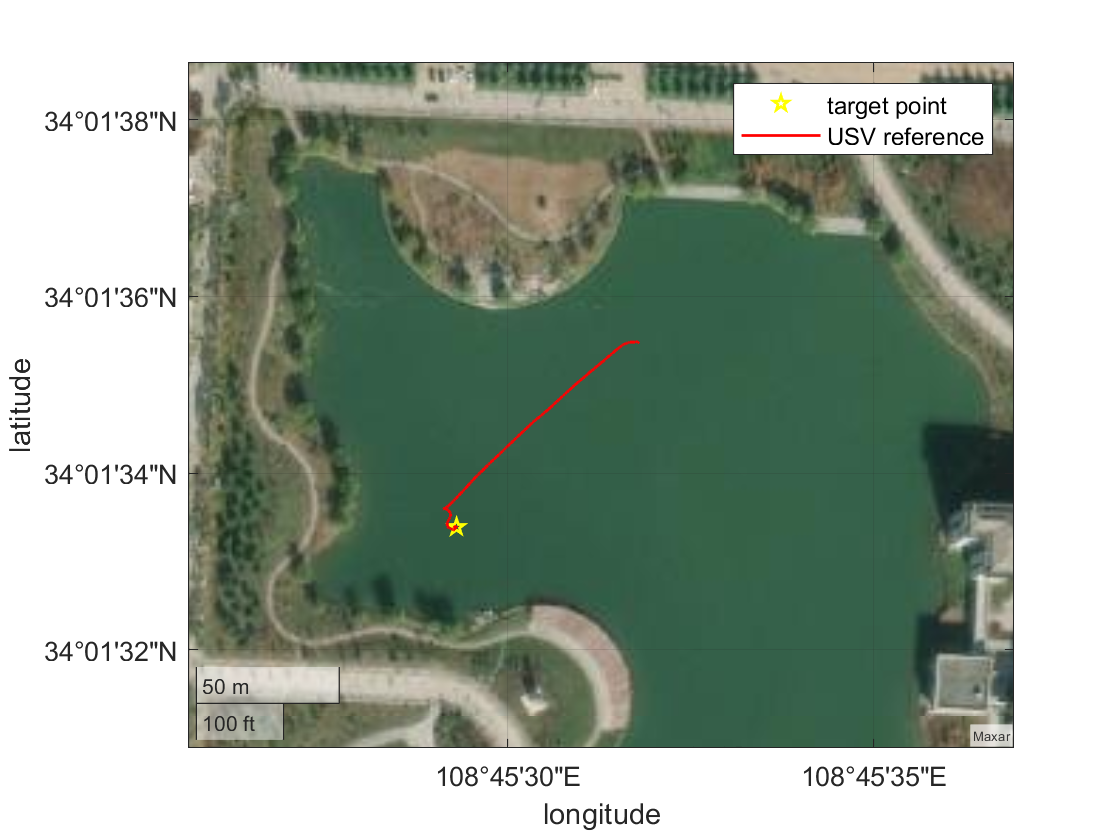}
\label{figure6}}
\hfil
\subfloat[]{\includegraphics[width=3.3in]{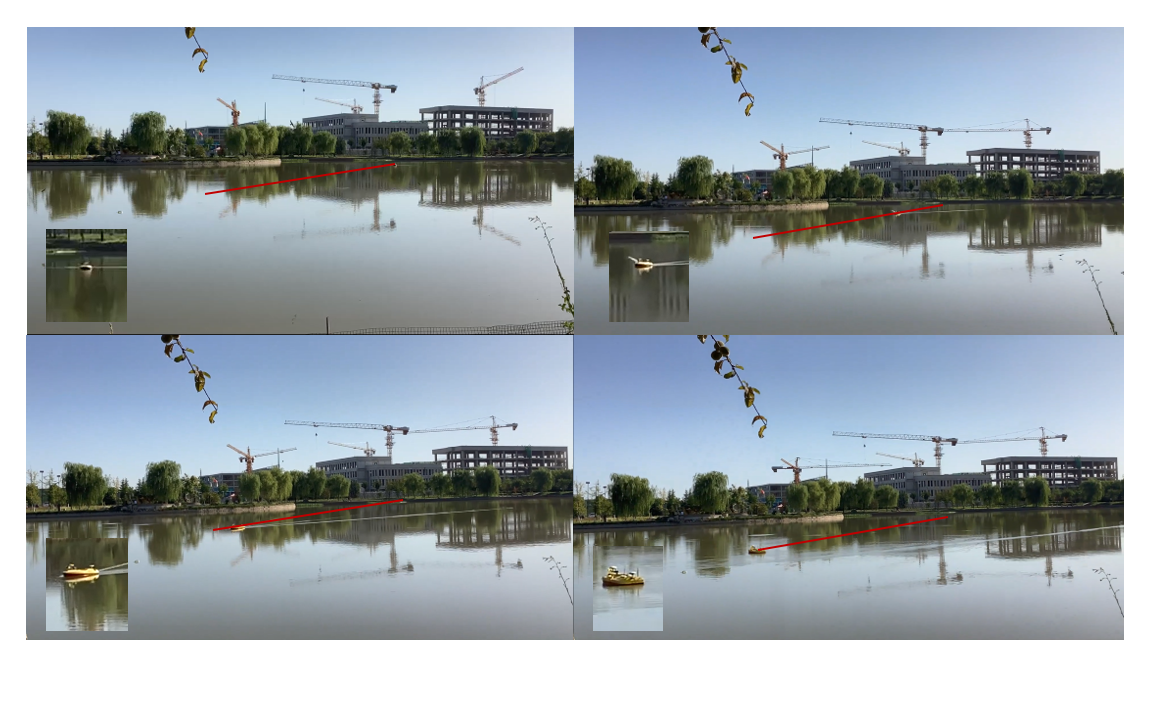}
\label{figure7}}
\caption{USV point stabilization experiment and trajectory. (a) USV trajectory. (b) Experimental picture.}
\end{figure*}

\begin{figure}[!t]
\centering
\includegraphics[width=3in]{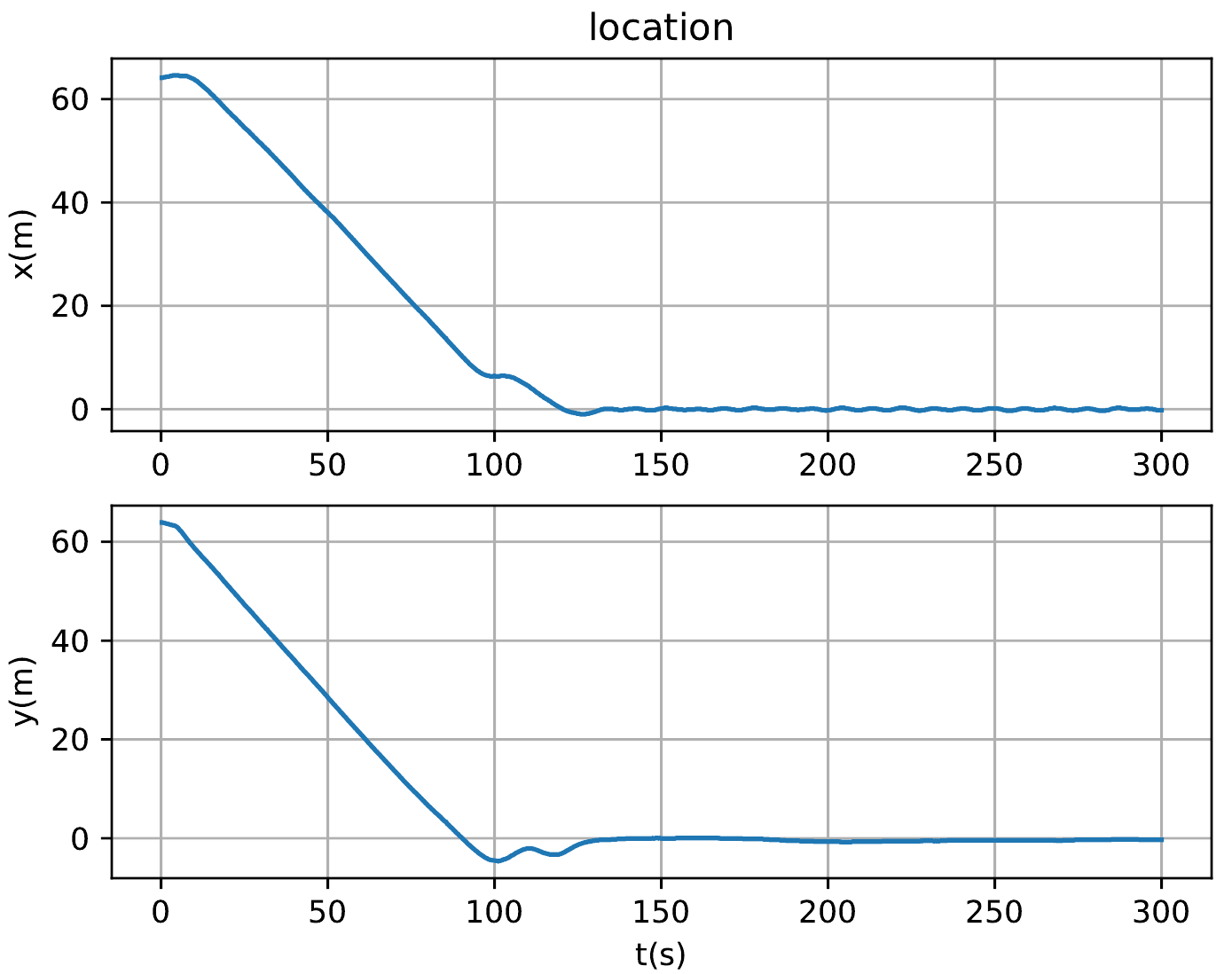}
\caption{Position trajectory of point stabilization experiment with initial position $[64, 64]^T$.}
\label{figure8}
\end{figure}

\begin{figure}[!t]
\centering
\includegraphics[width=3in]{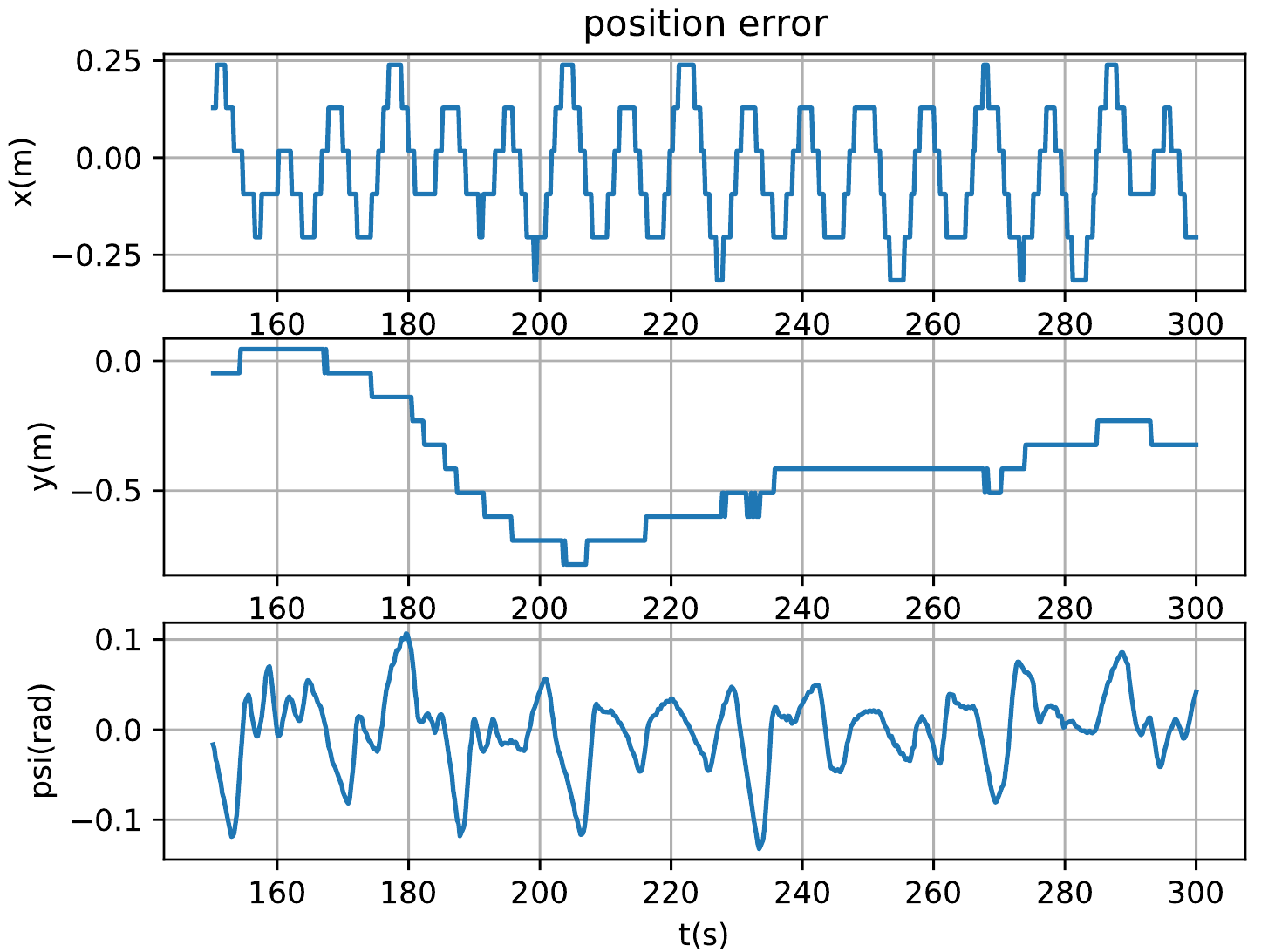}
\caption{Position error of point stabilization experiment.}
\label{figure9}
\end{figure}

\begin{figure}[!t]
\centering
\includegraphics[width=3in]{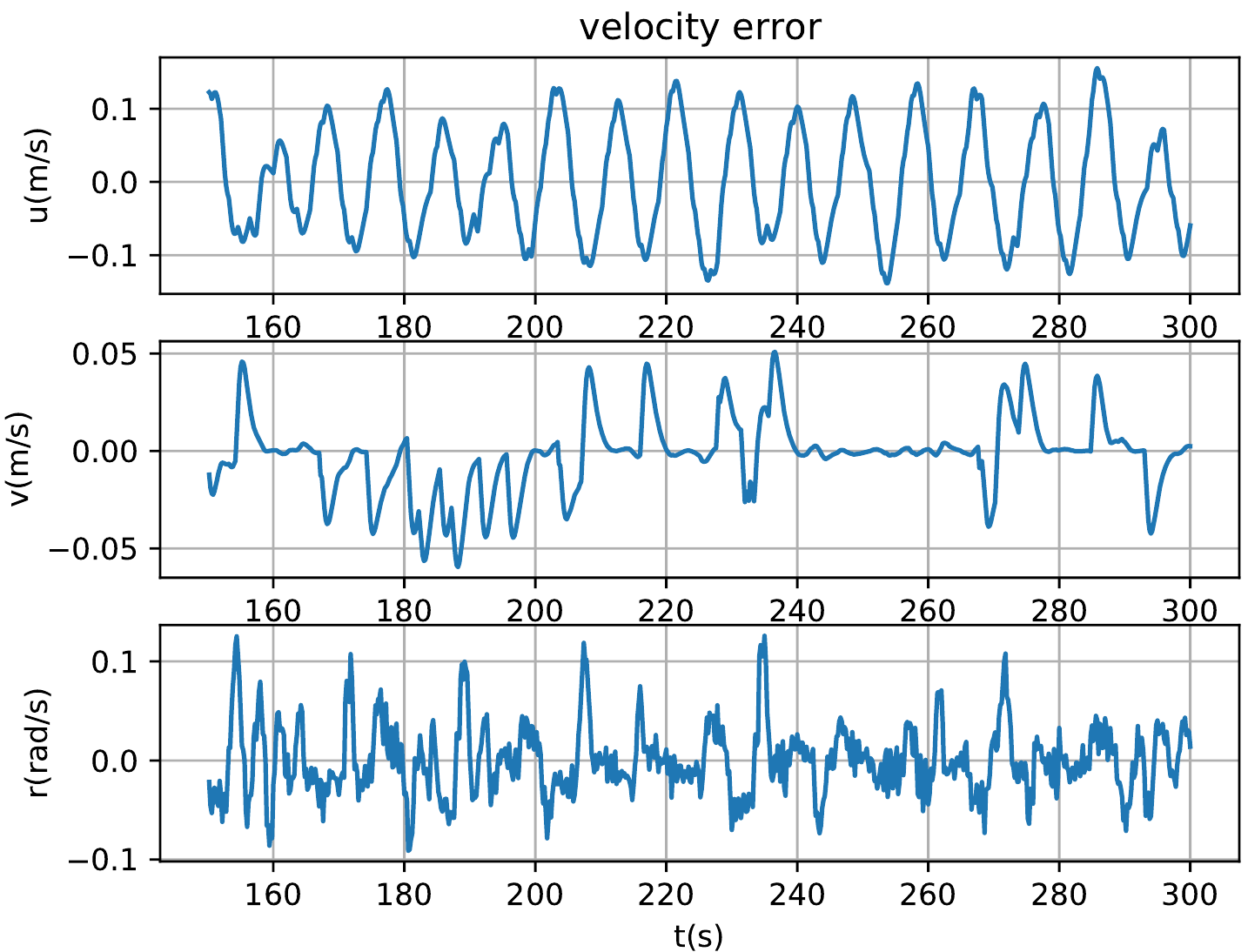}
\caption{Velocity error of point stabilization experiment.}
\label{figure10}
\end{figure}

\begin{table}[!t]
\caption{RMS Values of State Errors\label{tab:table1}}
\centering
\begin{tabular}{|c||c|c|c|c|c|c|}
\hline
state error & $x$ & $y$ & $\psi$ & $u$ & $v$ & $r$\\
\hline
RMS vlaue & $0.151$ & $0.442$ & $0.043$ & $0.076$ & $0.019$ & $0.036$\\
\hline
unit & $m$ & $m$ & $rad$ & $m/s$ & $m/s$ & $rad/s$\\
\hline
\end{tabular}
\end{table}

For trajectory tracking experiments, we design a reference trajectory $(\bm {x}_d(k), \bm {u}_d(k))$ as a circle. The reference trajectory and USV share the same kinetic model in (\ref{example_system})-(\ref{example_system2}), it is denoted as $\bm{x}_d(k+1) = \bm{f} (\bm{x}_d(k),\bm{u}_d(k))$. Here, $\bm{x}_d(k)=[x_d(k), y_d(k),\psi(k), u_d(k), v(k), r(k)]^T$ and $\bm {u}_d(k)=[F_d(k),M_d(k)]^T$. In this experiment, the reference force and moment are set as: $F_d=17.5N$ and $M_d=1Nm$, respectively. The reference trajectory under this control input is a circle, as shown in Fig. \ref {figure13}.

To track this trajectory, we define the difference between the system state and the reference trajectory state as $\bm{x}_e(k)=\bm{x}(k)-\bm{x}_d(k)$ and the difference between the system control input and the reference input as $\bm{u}_e(k)=\bm{u}(k)-\bm{u}_d(k)$.

We use $\bm{x}_e$ and $\bm{u}_e$ instead of $\bm x$ and $\bm u$ as the NMPC cost function optimization variables. Naturally, the input of the neural network changes to $\bm {x}_e(k)$ and the output changes to $\bm{U}_e(k)$, where $\bm{U}_e(k)=[\bm{u}_e(k;k),\bm{u}_e(k+1;k),...,\bm{u}_e(k+N-1;k)]$. The three weighting matrices $Q=diag(q_{11},q_{22},q_{33},q_{44},q_{55},q_{66})=diag(10,10,0.1,1,1,1)$, $R=diag(r_{11},r_{22})=diag(0.01,0.01)$ and $P=diag(10,10,0.1,1,1,1)$, respectively. The MPC constraints and neural network parameters are the same as the simulation experiments. The terminal region and terminal control law are designed as reference \cite{AUV}.
We deploy the designed DNN-based policy into the trajectory control of USV, and the trajectory tracking control algorithm is executed in real time with a sampling rate at $5Hz$.

\begin{figure*}[!t]
\centering
\subfloat[]{\includegraphics[width=2.5in]{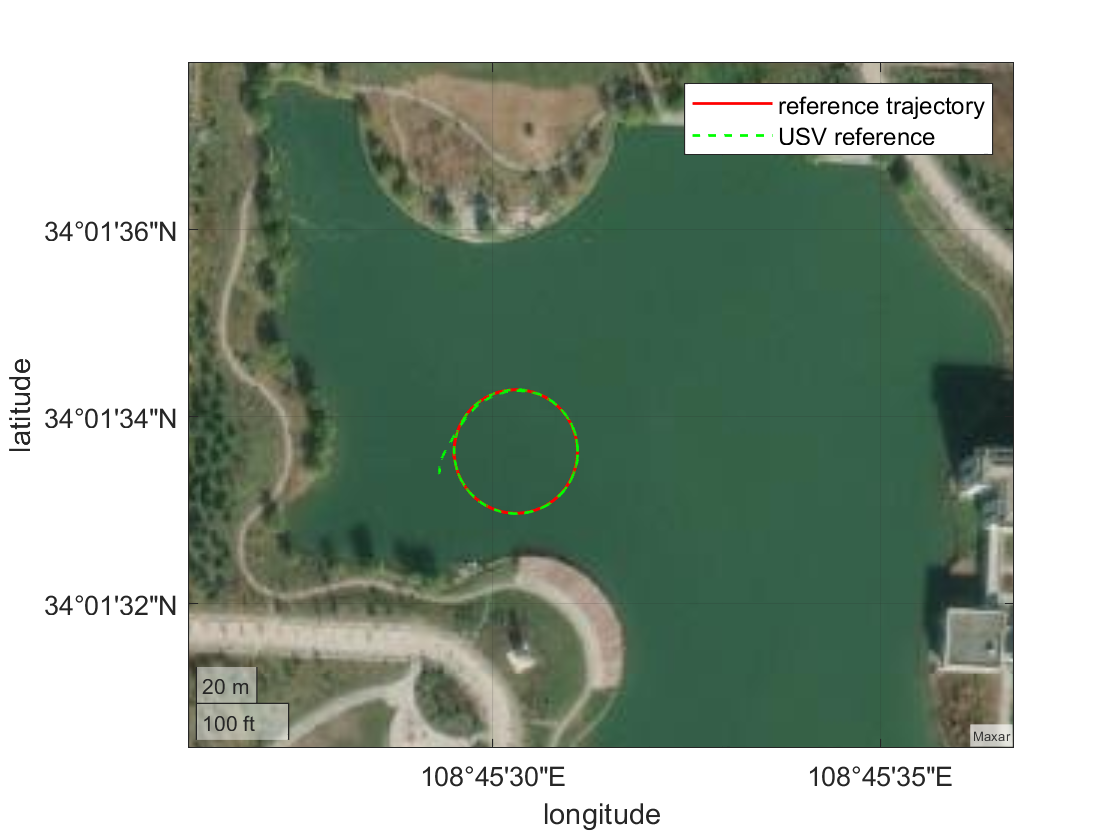}
\label{figure11}}
\hfil
\subfloat[]{\includegraphics[width=4in]{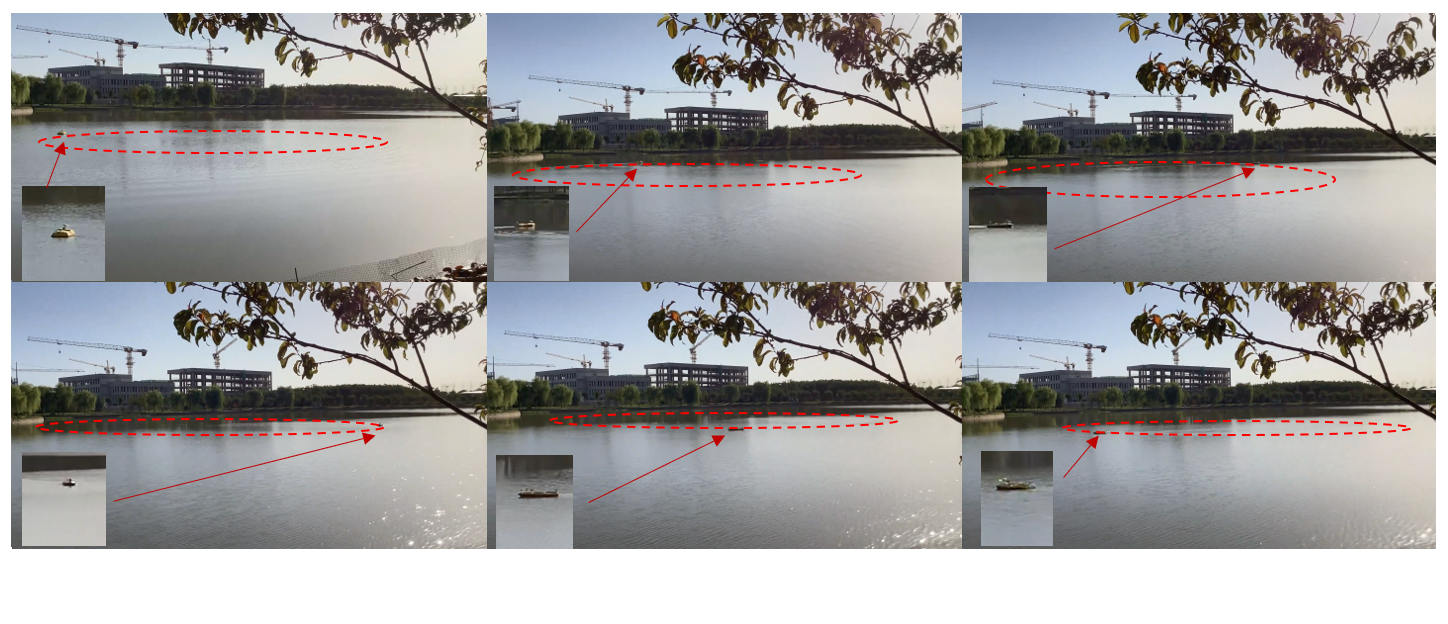}
\label{figure12}}
\caption{USV trajectory tracking experiment and trajectory.  (a) USV trajectory. (b) Experimental picture.}
\end{figure*}

\begin{figure}[!t]
\centering
\includegraphics[width=3in]{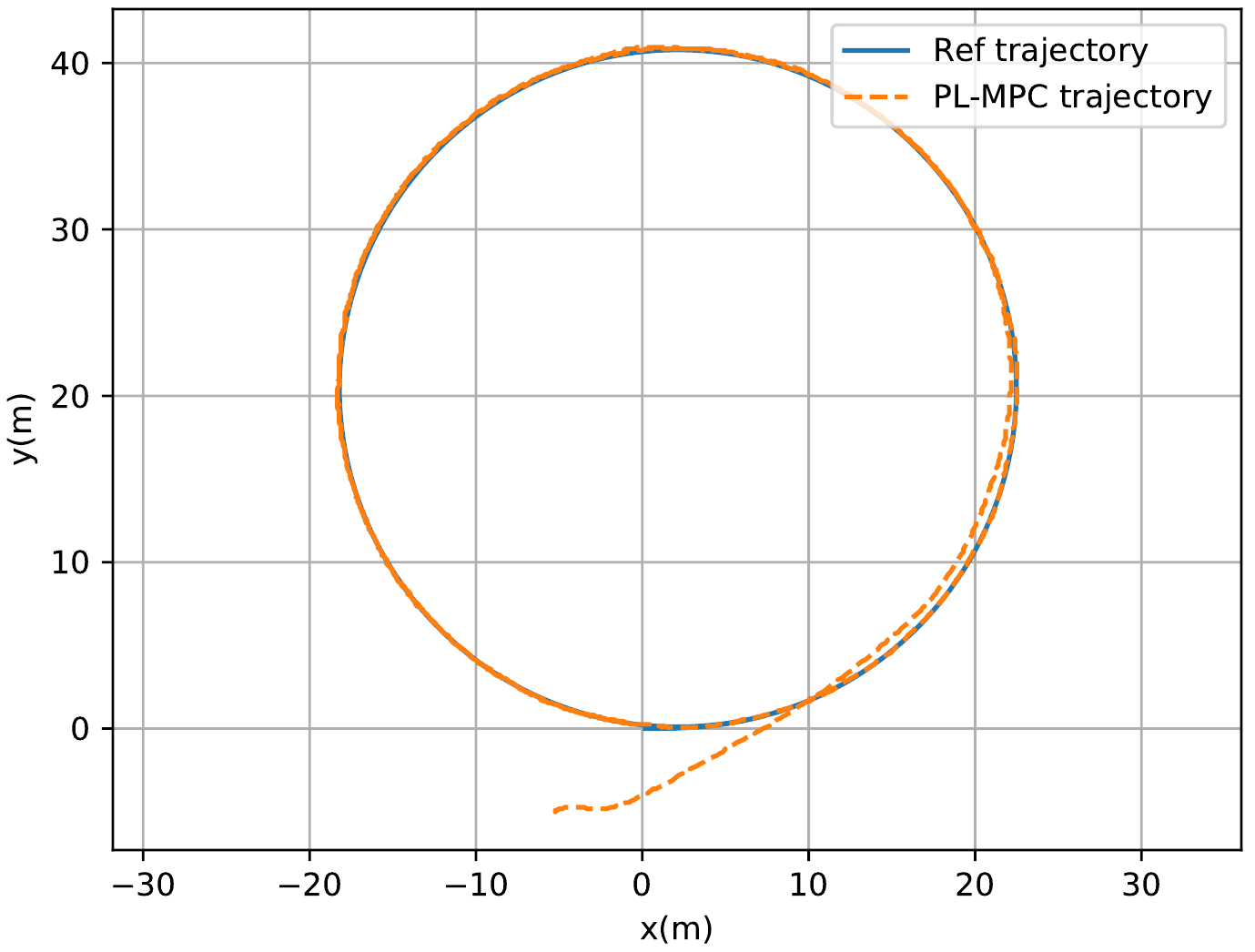}
\caption{Trajectory of USV with initial position $[-5,-5]^T$.}
\label{figure13}
\end{figure}

\begin{figure}[!t]
\centering
\includegraphics[width=3in]{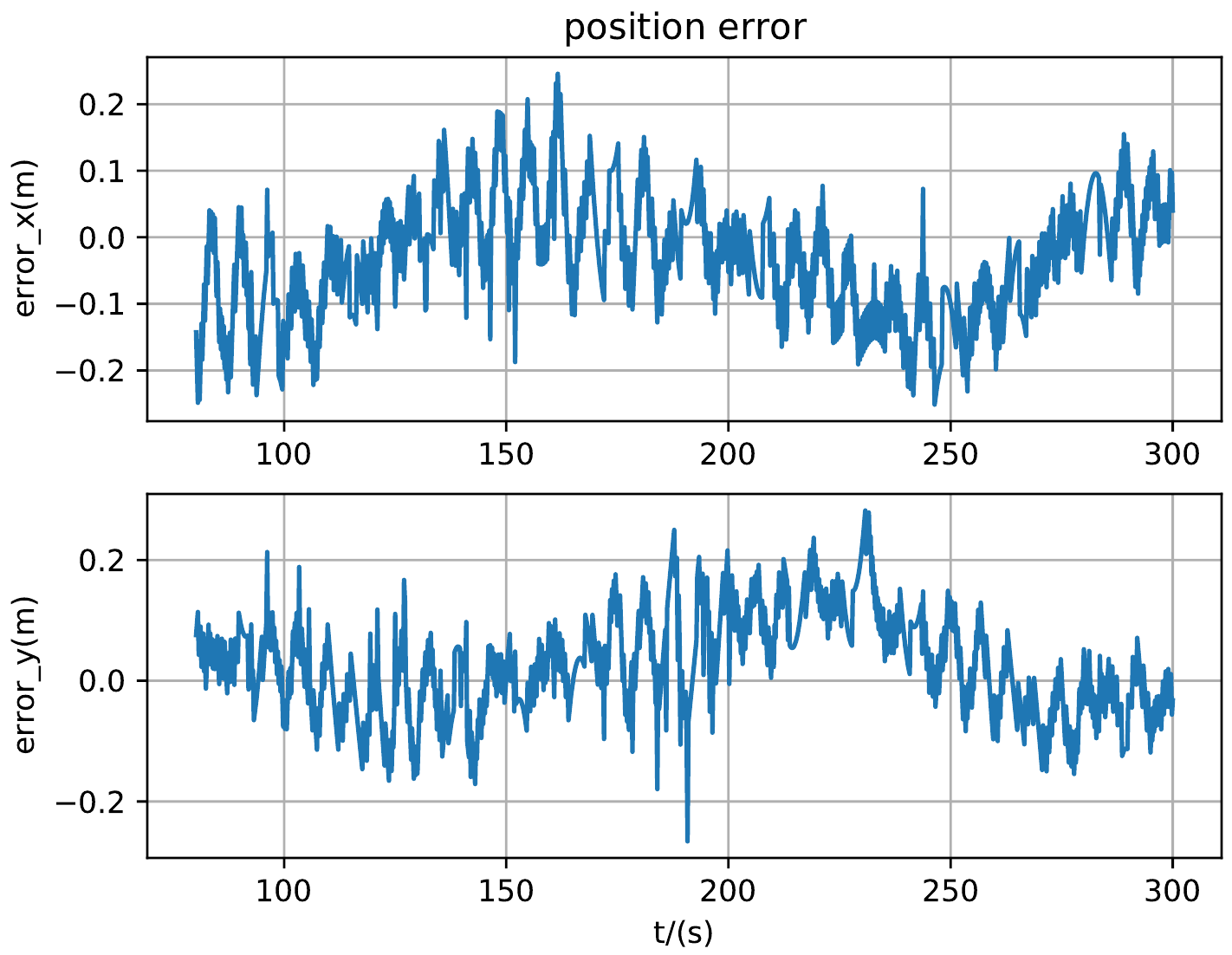}
\caption{Position error of trajrctory tracking experiment.}
\label{figure14}
\end{figure}
Fig. \ref {figure11} is the trajectory of USV point stabilization experiment on the map, and Fig. \ref {figure12} shows the execution process of the proposed method in the experiment.
Fig. \ref {figure13} shows the USV position trajectory and reference trajectory, from which it can be seen that the proposed PL-MPC algorithm provides excellent tracking performance. The position errors are shown in Fig. \ref {figure14} and the velocity errors are shown in Fig. \ref {figure15}. The RMS values of position and velocity error are $0.094m$, $0.088m$ and $0.044m/s$, $0.079m/s$ demonstrated in Table \ref{tab:table2}. It can be observed that the position and velocity errors under the PL-MPC algorithm are small. This verifies the effectiveness of the proposed PL-MPC algorithm in the actual environment.

\begin{figure}[!t]
\centering
\includegraphics[width=3in]{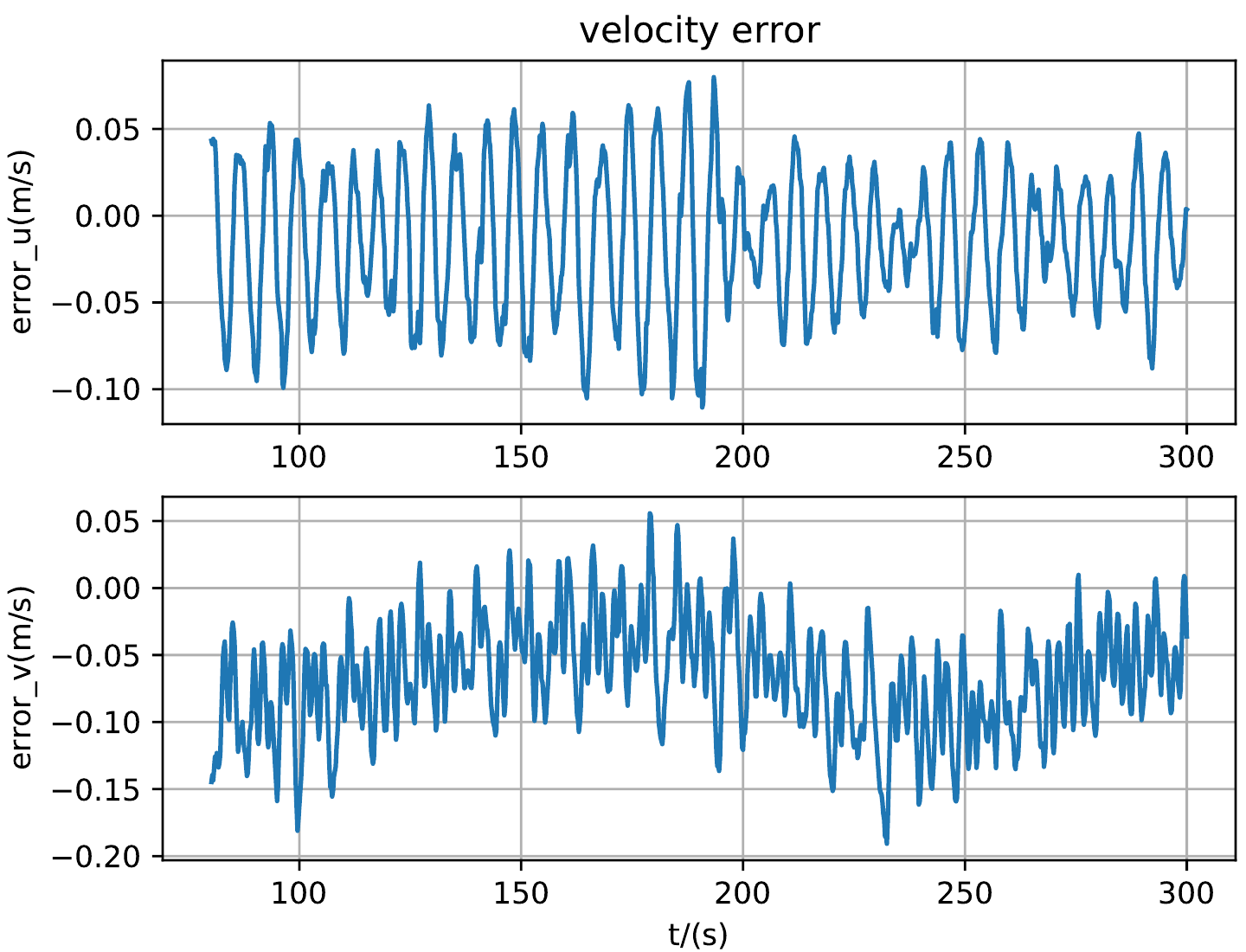}
\caption{Velocity error of trajrctory tracking experiment.}
\label{figure15}
\end{figure}

\begin{table}[!t]
\caption{RMS Values of Position and Velocity Errors\label{tab:table2}}
\centering
\begin{tabular}{|c||c|c|c|c|}
\hline
state error & $x$ & $y$ & $u$ & $v$ \\
\hline
RMS vlaue & $0.094$ & $0.088$ & $0.044$ & $0.079$\\
\hline
unit & $m$ & $m$ & $m/s$ & $m/s$\\
\hline
\end{tabular}
\end{table}

\section{CONCLUSIONS}\label{section:conclusion}

In this paper, we have developed a PL-MPC method for nonlinear discrete-time system with state and control inputs constraints, and implemented it successfully to the motion control of USVs. The DNN is trained to approximate NMPC control policy to accelerate the computational speed. By this method, we have designed the dual optimization learning algorithm, making the approximation control policies satisfy the control input and state constraints. The sufficient conditions on ensuring the closed-loop stability have also been developed. The hardware experiment for the motion control of USV have verified the feasibility and advantages of the proposed method.



\vfill

\end{document}